\theoremstyle{plain}
\newtheorem{theorem}{Theorem}[section]
\newtheorem{proposition}[theorem]{Proposition}
\newtheorem{lemma}[theorem]{Lemma}
\newtheorem{corollary}[theorem]{Corollary}
\theoremstyle{definition}
\newtheorem{definition}[theorem]{Definition}
\theoremstyle{remark}
\icmltitlerunning{Eluder-based Regret for Stochastic Contextual MDPs}
\newcommand{\D}{\mathcal{D}} 
\newcommand{\C}{\mathcal{C}} 
\newcommand{\F}{\mathcal{F}} 
\newcommand{\Fp}{\mathcal{P}} 
\newcommand{\M}{\mathcal{M}} 
\newcommand{\Hist}{\mathbb{H}} 
\newcommand{\Mhat}{\widehat{\mathcal{M}}}
\newcommand{\betaP}{\beta_P}
\newcommand{\betaR}{\beta_r}
\newcommand{\Esq}[1][\pi_i]{
\mathcal{E}^t_{\mathrm{sq}}(#1,c)}
\newcommand{\dHE}{d_{\mathrm{E}}}
\newcommand{\X}{\mathcal{X}}
\newcommand{\Ehelt}[1][\pi_i]{
\mathcal{E}^t_{\mathrm{H}}(#1,c)}
\newcommand{\Eheli}[1][\pi_i]{
\mathcal{E}^i_{\mathrm{H}}(#1,c)}
\newcommand{\Y}{\mathcal{Y}}
\newcommand{\dEP}{d_{\Fp}}
\begin{document}

\twocolumn[
\icmltitle{Eluder-based Regret for Stochastic Contextual MDPs}



\icmlsetsymbol{equal}{*}

\begin{icmlauthorlist}
\icmlauthor{Orin Levy}{equal,yyy}
\icmlauthor{Asaf Cassel}{equal,yyy}
\icmlauthor{Alon Cohen}{sch,comp}
\icmlauthor{Yishay Mansour}{equal,yyy,comp}
\end{icmlauthorlist}

\icmlaffiliation{yyy}{Balavatnick school of Computer Science, Tel Aviv University, Tel Aviv, Israel}
\icmlaffiliation{comp}{Google Research, Tel Aviv, Israel}
\icmlaffiliation{sch}{School of Electrical Engeneering, Tel Aviv University, Tel Aviv, Israel}
\icmlcorrespondingauthor{Orin Levy}{orinlevy@mail.tau.ac.il}
\icmlcorrespondingauthor{Asaf Cassel}{acassel@mail.tau.ac.il}
\icmlcorrespondingauthor{Alon Cohen}{alonco@mail.tau.ac.il}
\icmlcorrespondingauthor{Yishay Mansour}{mansour.yishay@gmail.com}
\icmlkeywords{Machine Learning, Reinforcement Learning Theory, Contextual MDPs, Eluder Dimension}

\vskip 0.3in
]



\printAffiliationsAndNotice{\icmlEqualContribution} 

\begin{abstract}
    We present the E-UC$^3$RL algorithm for regret minimization in Stochastic Contextual Markov Decision Processes (CMDPs). The algorithm operates under the minimal assumptions of realizable function class and access to \emph{offline} least squares and log loss regression oracles.
    Our algorithm is efficient (assuming efficient offline regression oracles) and enjoys 
    a regret guarantee of
    $
    \widetilde{O}(H^3 \sqrt{T |S| |A|d_{\mathrm{E}}(\mathcal{P}) \log (|\mathcal{F}| |\mathcal{P}|/ \delta) )})
    $
    , with $T$ being the number of episodes, $S$ the state space, $A$ the action space, $H$ the horizon, $\mathcal{P}$ and $\mathcal{F}$ are finite function classes used to approximate the context-dependent dynamics and rewards, respectively, and $d_{\mathrm{E}}(\mathcal{P})$ is the Eluder dimension of $\mathcal{P}$ w.r.t the Hellinger distance.
    To the best of our knowledge, our algorithm is the first efficient and rate-optimal regret minimization algorithm for CMDPs that operates under the general offline function approximation setting. In addition, we extend the Eluder dimension to general bounded metrics which may be of independent interest.
\end{abstract}


\section{Introduction}


Reinforcement Learning (RL) is a field of machine learning that pertains to sequential decision making under uncertainty.
At the heart of RL is the Markov Decision Process (MDP), a fundamental mathematical model that has been studied extensively.
An agent repeatedly interacts with an MDP by observing its state $s \in S$ and selecting an action $a \in A$, which leads to a new state $s'$ and an instantaneous reward that reflects the quality of the action taken.
The agent's goal is to maximize her return during each episode of interaction with the MDP.
MDPs can be applied to a wide range of real-life scenarios, including advertising, healthcare, games, robotics \citep[see, e.g., ][]{Sutton2018,MannorMT-RLbook}.


Many modern applications involve the presence of additional side information, or \emph{context}, that impacts the environment.
A naive approach to handling the context is to extend the state space of the environment to include it. 
However, this method increases the complexity of learning and policy representation.
Contextual MDPs (CMDPs) offer a more efficient solution by keeping the state space small and treating the context as additional side-information that the agent observes at the start of each episode. 
Furthermore, there exists a mapping from each context to an MDP, and the optimal policy for a given context is the optimal policy of the corresponding MDP~\citep{hallak2015contextual}.
An example of a context is user information that remains constant throughout the episode. 
Such information may include the user's age and interests, and can deeply impact decision making.
This feature makes CMDPs an excellent model for recommendation systems.

As is common in recent works, the aforementioned mapping from context to MDP is assumed to be taken from a known function class, and access to the function class is provided via an optimization oracle.
A distinctive feature between works is whether they assume access to \emph{online} or \emph{offline} oracles. 
Intuitively, in both settings we have a function class $\F = \brk[c]{f : X \to Y}$, a loss $\ell: Y \times Y \to \R$, and a dataset\footnote{We think of $X$ as the context and $Y$ as the MDP.} $\brk[c]{(x_i,y_i)}_{i=1}^{n}$. An offline oracle observes the entire data and needs to find $\hat{f}_\star \in \argmin_{f \in \F}\sum_{i=1}^{n} \ell(f(x_i),y_i)$. 
An online oracle makes a sequence of predictions $f_1, \ldots, f_n$ where $f_i$ can depend on data up to $i-1$, and its goal is to minimize regret, given by $\sum_{i=1}^{n} \ell(f_i(x_i),y_i) - \ell(\hat{f}_\star(x_i),y_i)$.
The offline problem can potentially be easier to solve than the online problem. 
Moreover, practical deep RL applications typically work in the offline regime.
%

Previously, \citet{modi2020no} obtained $\widetilde{O}(\sqrt{T})$ regret for a generalized linear model (GLM).
\citet{foster2021statistical} obtain $\widetilde{O}(\sqrt{T})$ regret for general function approximation and adversarially chosen contexts, assuming access to a much stronger online estimation oracle.
However, they noted the challenge of implementing their methodology using offline oracles.
It thus remained open whether, for stochastic contexts, we can restrict the access to offline oracles.
Recently, \citet{levy2022optimism} gave an $\widetilde{O}(\sqrt{T}/p_{min})$ regret algorithm for stochastic contexts using offline least squares regression,
where $p_{min}$ is a minimum reachability parameter of the CMDP.
This parameter can be arbitrarily small and in general CMDPs leads to an
%
$\widetilde{O}(T^{3/4})$ regret guarantee.
The question of whether the minimum reachability assumption can be obviated or replaced by a less restrictive assumption on the function class remained open. 

In this work, we give the first $\smash{\widetilde{O}(\sqrt{T})}$ regret algorithm for stochastic contexts using standard offline oracles, under the bounded Eluder dimension~\citep{russo2013eluder} assumption (more details in~\cref{subsec:eluder}).

\noindent\textbf{Contributions.}
We present the E-UC$^3$RL algorithm for stochastic CMDPs with \emph{offline} regression oracles. Our algorithm is efficient (assuming efficient oracles) and enjoys an
$
\widetilde{O}\big(H^3 \linebreak[1]\sqrt{T \abs{S} \abs{A} \dHE(\Fp)\log \brk{\abs{\F}\abs{\Fp}/ \delta} }\big)
$
regret bound
with probability at least $1-\delta$, where $S$ is the state space, $A$ the action space, $H$ the horizon, $\Fp$ and $\F$ are finite function classes used to approximate the context-dependent dynamics and rewards, respectively, and $\dHE(\Fp)$ is the Eluder dimension with respect to Hellinger distance of the context-dependent dynamics function class $\Fp$.
The algorithm builds on the ``optimistic in expectation'' approach of~\citet{levy2022optimism} but modifies it with a log-loss oracle for the dynamics approximation and carefully chosen counterfactual reward bonuses.
To that end we present an extension of the Eluder dimension to general bounded metrics (rather than the $\ell_2$ norm considered by~\citet{russo2013eluder} and \citet{osband2014model}).
An additional key technical tool enabling our result is a multiplicative change
of measure inequality for the value function. Both tools may be of separate interest.
%


\noindent\textbf{Comparison with \citet{levy2022optimism}.} This work is most closely related to ours. It relies on a minimum reachability assumption and provides a regret bound of $\widetilde{O}(\sqrt{T}/p_{min})$ where $p_{min}$ is the reachability parameter of the CMDP. This implies that any policy $\pi$ will reach any state $s$ with a probability of at least $p_{min}$ for any context $c$, hence $p_{min} \le 1 / \abs{S}$. As such, any policy inherently explores with probability $p_{min}$, significantly simplifying the exploration task. 
While the notion of minimum reachability is intuitive, it fails even for deterministic transition functions where $p_{min} = 0$. Moreover, it is impossible to estimate it online as we typically observe each context only once.

The primary focus of our work is to replace minimum reachability, which is a structural assumption about the true CMDP, with an assumption about the dynamics function class, which is chosen by the learner. This makes learning an effective exploratory policy non-trivial, necessitating innovative confidence bounds that capture the intricacies of the function class learning complexity. In our approach, we employ the Eluder dimension as the complexity measure. 
One can show that minimum reachability implies a bound on the Eluder dimension, but, the Eluder dimension can be much smaller. 

\noindent\textbf{The general function approximation literature.}
We stress that the role of the Eluder dimension in this work is to avoid direct dependence on the size of the context space, which could be prohibitively large, while also maintaining computational efficiency. This is unlike previous works on function approximation in RL (see, e.g., \citet{jiang2017contextual,jin2021BellmanEluder,pac-uniform-with-eluder,chen2022general,wang2020reinforcement,dann2021provably,liu2022partially}) that use an Eluder dimension to avoid dependence on $\abs{S}, \abs{A}$. These works are often computationally inefficient and  require additional structural assumptions regarding the MDP, such as low Bellman-rank or Bellman completeness, or a much stronger optimization~oracle. 


%


\noindent\textbf{Additional Related Work.}
%
\citet{hallak2015contextual} were the first to study regret guarantees in the the CMDP model. However, they assume a small context space, and their regret is linear in its size. 
\citet{jiang2017contextual} present OLIVE, a sample efficient algorithm for learning Contextual Decision Processes (CDP) under the low Bellman rank assumption. In contrast, we do not make any assumptions on the Bellman rank.
\citet{sun2019model} use the Witness Rank to derive PAC bounds for model based learning of CDPs.  
\citet{modi2018markov} present generalization bounds for learning \emph{smooth} CMDPs and finite contextual linear combinations of MDPs. 
\citet{modi2020no} present a regret bound of $\widetilde{O}(\sqrt{T})$ for  CMDPs under a Generalized Linear Model (GLM) assumption.
Our function approximation framework is more general than smooth CMDPs or GLM.

\citet{foster2021statistical} 
present the Estimation to Decision (E2D) meta algorithm and apply it to obtain $\widetilde{O}(\sqrt{T})$ regret for adversarial Contextual RL. Later,~\citet{xie2022role} show sample complexity bounds for online reinforcement learning using online oracle, that can be also applied to CMDPs.
\citet{levy2023adversarialCMDPs} obtained similar results using their OMG-CMDP! algorithm. However, these works assume access to online estimation oracles and their bounds scale with the oracle's regret. In contrast, we use substantially weaker offline regression oracles. 
It is not clear to us whether \citeauthor
{foster2021statistical}'s Inverse Gap Minimization (IGM) technique or \citeauthor
{levy2023adversarialCMDPs}'s convex optimization with log-barrier method can be applied to CMDPs with offline regression oracles.


\citet{levy2022learning} study the sample complexity of learning CMDPs using function approximation and provide the first general and efficient reduction from CMDPs to offline supervised learning.
However, their sample complexity scales as $\smash{\epsilon^{-8}}$, and thus they cannot achieve the optimal $\smash{\sqrt{T}}$ rate for the regret.
%
%
%
\citet{levy2022optimism}, previously mentioned here in relation to upper bounds, also showed an $\Omega(\sqrt{TH|S||A|\log(|\mathcal{G}|/|S|)/\log(|A|)})$ regret lower bound for the general setting of offline function approximation with $\abs{\mathcal{G}}$, the size of the function class used to approximate the rewards in each state.

More broadly, CMDPs are a natural extension of the extensively studied Contextual Multi-Armed Bandit (CMAB) model.
CMABs augment the Multi-Arm Bandit (MAB) model with a context that determines the rewards \citep{MAB-book,Slivkins-book}. \citet{Langford2007,agarwal2014taming} use an optimization oracle to derive an optimal regret bound that depends on the size of the policy class they compete against. 
Regression based approaches were presented in \citet{agarwal2012contextual,foster2020beyond,foster2018practical,foster2021efficient,simchi2021bypassing,zhang2022feel}.
Most closely related to our work, \citet{xu2020upper} present the first optimistic algorithm for CMAB. They assume access to a least-squares regression oracle and achieve $\widetilde{O}(\sqrt{T |A| \log |\F|})$ regret, where $\F$ is a finite and realizable function class, used to approximate the rewards.
Extending their techniques to CMDPs necessitates accounting for the context-dependent dynamics whose interplay with the rewards significantly complicates decision making. This is the main challenge both in our work and in \citet{levy2022optimism}.


%

The Eluder dimension was introduced by \citet{russo2013eluder} and applied to derive sublinear regret for MABs and CMABs. \citet{osband2014model} showed an application of the Eluder dimension to derive a regret bound for model-based reinforcement learning and \citet{wen2017efficient} for deterministic systems.
\citet{Wang2020RLEluder} use it to derive a regret bound for value function approximation. 
\citet{jin2021BellmanEluder} present the Bellman-Eluder dimension and use it to develop sample-efficient algorithms for a family of RL problems where both the Bellman rank and the Eluder dimension are low.
\citet{Ayoub2020Value-Targeted-Eluder} apply the Eluder dimension to derive a regret bound for tabular episodic RL using targeted value regression. We, on the other hand, extend the Eluder dimension to general bounded metrics and apply it to contextual RL.


\section{Preliminaries}

\subsection{Episodic Loop-Free Markov Decision Process (MDP)}
An MDP is defined by a tuple $(S,A,P,r,s_0, H)$, where $S$ and $A$ are finite sets describing the state and action spaces, respectively;
$s_0\in S$ is the unique start state; $H \in \N$ is the horizon;
$P :  S \times A \times S \to [0,1]$ defines the probability of transitioning to state $s'$ given that we start at state $s$ and perform action $a$; and $r(s,a)$ is the expected reward of performing action $a$ at state $s$.
An episode is a sequence of $H$ interactions where at step $h$, if the environment is at state $s_h$ and the agent performs action $a_h$ then (regardless of past history) the environment transitions to state $s_{h+1} \sim P(\cdot \mid s_h, a_h)$ and the agent receives reward $R(s_h, a_h) \in [0,1]$, sampled independently from a distribution $\D_{s_h,a_h}$ that satisfies $r(s_h, a_h) = \E_{\D_{s_h,a_h}} \brk[s]*{ R(s_h,a_h)}$.

For technical convenience and without loss of generality, we assume that the state space and accompanying transition probabilities have a loop-free (or layered) structure. Concretely, we assume that the state space can be decomposed into $H+1$ disjoint subsets (layers) $S_0, S_1, \ldots, S_{H-1}, S_H$ such that transitions are only possible between consecutive layers, i.e., for $h' \neq h+1$ we have $P(s_{h'} | s_h, a) = 0$ for all $s_{h'} \in S_{h'}, s_h \in S_h, a \in A$.
In addition, $S_H = \{s_H\}$, meaning there is a unique final state with reward $0$.
We note that this assumption 
can always be satisfied by increasing the size of the state space by a factor of $H$.

A \emph{deterministic stationary policy} $\pi : S \to A$ is a mapping from states to actions. 
Given a policy $\pi$ and MDP 
    $
        M 
        = 
        (S,A,P,r,s_0, H)
    $, 
the
$h \in [H-1] $ stage value function of a state $s \in S_h$ is defined as 
    \[
        V^{\pi}_{M,h} (s)
        = 
        \mathbb{E}_{\pi, M} 
        \brk[s]*{
        \sum_{k=h}^{H-1} r(s_k, a_k)\Bigg| s_h = s 
        }
        .
    \]  
For brevity, when $h = 0$ we denote $V^{\pi}_{M,0}(s_0) := V^{\pi}_M (s_0)$, which is the expected cumulative reward under policy $\pi$ and its measure of performance. A  policy $\pi^\star_M$ is \emph{optimal} for MDP $M$ if it satisfies that
$
\pi^\star_M
\in
\argmax_{\pi: S \to A}\{V^{\pi}_{M}(s_0)\}
$.
It is well known that such a policy is optimal even among the class of stochastic and history dependent policies 
(see, e.g., \citealp{puterman2014markov,Sutton2018,MannorMT-RLbook}).

\subsection{Problem Setup: Stochastic Contextual Markov Decision Process (CMDP)}
 A \emph{CMDP} is defined by a tuple $(\mathcal{C},S, A, \mathcal{M})$ where $\mathcal{C}$ is the context space, $S$  the state space and $A$  the action space. The mapping $\mathcal{M}$  maps a context $c\in \mathcal{C}$ to an MDP
    $
        \mathcal{M}(c) 
        =
        (S, A, P^c_\star, r^c_\star,s_0, H)
    $, 
where $r^c_\star(s,a) = \E[R^c_\star(s,a)|c,s,a]$, $R^c_\star(s,a) \sim \D_{c,s,a}$.
We assume that $R^c_\star(s,a) \in [0,1]$.

We consider a \emph{stochastic} CMDP, meaning, the context is stochastic. Formally, we assume that there is an unknown distribution $\mathcal{D}$ over the context space $\mathcal{C}$, and for each episode a context is sampled i.i.d.\ from $\mathcal{D}$.
For mathematical convenience, we assume that the context space $\C$ is finite but potentially very large. Our results do not depend on the size of the context space and can be further extended to infinite context spaces.

%
%

A deterministic  \emph{context-dependent policy} $\pi : \mathcal{C} \times S \to A$ maps a context $c \in \mathcal{C}$ to a policy $\pi(c;\cdot) : S \to A$.    
Let $\Pi_\C$ denote the class of all deterministic context-dependent policies.

\noindent\textbf{Interaction protocol.} The interaction between the agent and the environment is defined as follows.
    In each episode $t=1,2,...,T$ the agent:
    \begin{enumerate}[nosep,label=(\roman*),leftmargin=*]
        \item Observes context $c_t \sim \D$;
        \item Chooses a policy $\pi_t$ (based on $c_t$ and the observed history);
        \item Observes trajectory
        $
        (c_t, s^t_0, a^t_0, r^t_0, \ldots, 
        s^t_H)$ generated by playing
        $\pi_t$  in $\M(c_t)$.
    \end{enumerate}
%
%
%
%
%
Our goal is to minimize the regret, defined as 
$$
\Regrv_T 
    := \sum_{t=1}^T 
    V^{\pi^\star(c_t;\cdot)}_{\M(c_t)}(s_0)
    -
    V^{\pi_t(c_t;\cdot)}_{\M(c_t)}(s_0)
    ,
$$ 
where
$
    \pi^\star \in \Pi_\C
$
is an optimal context-dependent policy.
We aim to derive regret bounds that are independent of the context space size $\abs{\C}$. For that purpose, we make function approximation assumptions in \cref{subsec:function-class-aassumption}, which rely on the following definition of Eluder dimension.

\subsection{Metric Eluder Dimension}\label{subsec:eluder}
We extend the notion of Eluder dimension, given by \citet{osband2014model}, to general bounded metrics.
Let $\X$ be a set and $(\Y, D)$ a bounded metric space. 
Let $\Fp \subseteq \brk[c]{\X \to \Y}$ be a set of functions from $\X$ to $\Y$.
We say that $x \in \X$ is $(D,\epsilon)-$dependent of $x_1, \ldots, x_n$ if and only if for any $P, P' \in \Fp$ it holds that 
\begin{align*}
    \sum_{i=1}^{n} D^2(P(x_i), P'(x_i))
    \le
    \epsilon^2
    \implies
    D^2(P(x), P'(x))
    \le
    \epsilon^2
    .
\end{align*}
We say that $x \in \X$ is $(D,\epsilon)-$independent of $x_1, \ldots, x_n$ if it is not $(D,\epsilon)-$dependent.

\begin{definition}[Metric-Eluder Dimension]\label{def:Metric-Eluder}
We say that $d := \dHE(\Fp, D, \epsilon)$ is the $(D, \epsilon)-$Eluder dimension of a class $\Fp$ if $d$ is the maximum length of sequences $x_1,\ldots, x_d$ and $\epsilon'_1,\ldots,\epsilon'_d$ such that for all $1 \le i \le d$, $x_i$ is $(D,\epsilon'_i)-$independent of its prefix $x_1, \ldots , x_{i-1}$ and $\epsilon'_i \ge \epsilon$.
\end{definition}
This quantity roughly corresponds to the number of queries required to $\epsilon$ identify a function in $\Fp$. The utility of this definition is summarized in the following result, which is a straightforward adaptation of
Proposition 6 in \citet{osband2014model} (proof in \cref{sec:eluder-proofs}).
%
For any $\Fp' \subseteq \Fp$, define its radius at $x \in \X$ as
$
    w_{\Fp'}(x)
    =
    \sup_{P,P' \in \Fp'} D({P}(x), {P'}(x))
$.

\begin{lemma}
\label{lemma:russo-lemma2}
    For any $t \in [T], h \in [H]$ let $x_h^t \in \X$ and ${P}_t \in \Fp$ be arbitrary.
    Define the confidence sets with parameter $\beta$ as
$$
    \Fp_t
    =
    \brk[c]*{
    P\in \Fp
    \;:\;
    \sum_{i=0}^{t-1} \sum_{h=0}^{H-1} D^2(P(x_h^i), {P}_t(x_h^i)
    )
    \le
    \beta
    }
    .
$$
    We have that
    \begin{align*}
        \sum_{t=1}^{T}\sum_{h=0}^{H-1} \brk{w_{\Fp_t}(x^t_h)}^2
        \le
        6 \dHE(\Fp, D, T^{-1/2}) \beta \log T
        .
    \end{align*}
\end{lemma}

\subsection{Function Class Assumptions}\label{subsec:function-class-aassumption}

We note that, without further assumptions, 
the regret may scale linearly in the size of the context space \citep{hallak2015contextual}. Even worse, if the context space contains more than $T$ contexts, and the distribution over the contexts is uniform, the regret may scale linearly in $T$. 
We overcome this limitation by imposing the following function approximation assumptions, that extend similar notions in the Contextual Multi-Armed Bandits literature \citep{agarwal2012contextual,russo2013eluder,foster2018practical,foster2021efficient,simchi2021bypassing} to CMDPs.


\noindent\textbf{Realizable reward function approximation.}\label{par:reward-function-approx}
Our algorithm gets as input a finite function class $\F \subseteq \C \times S \times A \to [0,1]$ such that there exists $f_\star \in \F$ that satisfies 
$
   f_\star(c,s,a)
   = 
   r^c_\star(s,a)
$ for all $c \in \C$ and $(s,a) \in S \times A
$.

\noindent\textbf{Realizable dynamics function approximation.}
Our algorithm gets as input a finite function class $\Fp \subseteq  S \times ( S \times A \times \C) \to [0,1]$ such that $P_\star \in \Fp$, and every function $P \in \Fp$ represents valid transition probabilities, i.e., satisfies 
$
    \sum_{s' \in S} P(s' \mid s,a,c)  = 1$ for all $c \in \C$ and $(s,a) \in S \times A
$.
For convenience, we denote $P(s' \mid s,a,c) = P^c(s' \mid s,a)$, for all $P \in \Fp$. 
%






\noindent\textbf{%
Offline regression oracles.}
Given a data set $D = \brk[c]{(c_i, s_i,a_i,s'_i,r_i)}_{i=1}^{n}$, we assume access to
offline oracles that solve the optimization problems:
%
\begin{align*}
    \tag{Least Squares Regression (LSR)}
    \widehat{f}
    \in
    \arg\min_{f \in \F }\sum_{i=1}^n (f(c_i,s_i,a_i) - r_i)^2,
    \\
    \tag{Log Loss Regression (LLR)}
    \widehat{P}
    \in
    \arg\min_{P \in \Fp }\sum_{i=1}^n \log \frac{1}{P^{c_i}(s'_i \mid  s_i, a_i)}.
\end{align*}
Notice that the above problems can always be solved by iterating over the function class. However, since we consider strongly convex loss functions, there are function classes where these optimization problems can be solved efficiently. One particular example is the class of linear functions. 




\noindent\textbf{Eluder Dimension (w.r.t the Squared Hellinger distance).}
As shown by \citet{foster2021statistical}, the log-loss oracle provides generalization guarantees with respect to the squared Helligner distance.
\begin{definition}[Squared Hellinger Distance]
\label{def:hellinger}
    For any two distributions $\mathbb{P}$, $\mathbb{Q}$ over a discrete support $X$, the Squared Hellinger Distance is defined as
    $$
        D^2_H(\mathbb{P}, \mathbb{Q}) 
        := 
        \sum_{x \in X} \brk2{\sqrt{\mathbb{P}(x)} - \sqrt{\mathbb{Q}(x)}}^2
    .
    $$
\end{definition}
\noindent
The Hellinger distance is a bounded metric. 
Thus, we assume a known upper bound $\dEP$ of $\dHE(\Fp, D_H, T^{-1/2})$, the Eluder dimension of $\Fp$ with respect to Hellinger distance.

Clearly, for a finite class the Eluder dimension is at most the number of functions.
For classes of discrete distributions, where the minimum probability is $p>0$, one can bound the Eluder dimension w.r.t. the Hellinger distance by $d_2/p$, where $d_2$ is the (standard) Eluder dimension w.r.t. $\ell_2$.
(See~\cref{lemma:Eluder-upper-bound-p-min}).

\section{Algorithm and Main Result}

We present Eluder Upper Counterfactual Confidence for Contextual Reinforcement Learning (E-UC$^3$RL), given in \cref{alg:UCCRL-main}. At each episode $t$, the algorithm estimates the reward and dynamics using the regression oracles. It then constructs an optimistic CMDP using reward bonuses and plays its optimal policy. The reward bonuses are inspired by the notion of \emph{counterfactual confidence}, suggested by \citet{xu2020upper} for CMABs. The original idea was to calculate the confidence bounds using the counterfactual actions of past policies given the current context. \citet{levy2022optimism} adapted this approach to CMDPs using the minimum reachability assumption, without which, it becomes crucial to also consider counterfactual states. Notice that the states are stochastically generated by the MDP in response to the agent's played actions. This makes counterfactual state computation impossible without access to
the 
true dynamics. Instead, we consider the counterfactual probabilities of a state-action pair and evaluate this quantity using the estimated dynamics. 
These probabilities are typically referred to as \emph{occupancy measures}~\citep{zimin2013online}. Concretely, for any non-contextual policy $\pi$ and dynamics $P$,
let $q_h(s,a \mid \pi, P)$ denote the probability of reaching state $s\in S$ and performing action $a\in A$ at time $h \in [H]$ of an episode generated using policy $\pi$ and dynamics $P$.
Note that, given $\pi$ and $P$, the occupancy measure of any state-action pair can be computed efficiently using a standard planning algorithm.

At round $t$ and $(s,a,h,c)-$tuple the cumulative occupancy measure of past policies, i.e., $\sum_{i=1}^{t-1} q_h(s,a|\pi_i^c, P_\star^c)$ is a good indicator for the quality of the estimated dynamics and rewards $\widehat{f}_t$ and $\widehat{P}_t$. Thus we would ideally choose bonuses inversely proportional to this quantity. Since $P_\star$ is unknown, it is natural to replace it in $q_h(\cdot)$ with the most recent estimate $\widehat{P}_t$. However, the instability of the oracle estimates $\widehat{P}_t$ means that $q_h(s,a,|\pi_i^c, \widehat{P}_t^c)$ can change arbitrarily with $t$, which may lead to overly large bonuses. We resolve this instability using the Eluder dimension assumption, which allows us to replace $\widehat{P}_t$ with $\widehat{P}_i$ in $q_h(\cdot)$, thus stabilizing the occupancy measure estimate of $\pi_i$ as $q_h(s,a,|\pi_i^c, \widehat{P}_i^c)$ for all $t$.
Finally, since our bonuses are based on past context-dependent policies, we first have to compute $\pi_k(c_t;\cdot)$ for all $k \in [t-1]$, which is the purpose of our internal loop~(\cref{ln:loop}).
%
%
%
%
%
%
%
%
%
%
\begin{algorithm}[!ht]
    \caption{Eluder Upper Counterfactual Confidence for Contextual RL (E-UC$^3$RL)
    }
    \label{alg:UCCRL-main}
    \begin{algorithmic}[1]
        \State
        {\textbf{inputs:}\;MDP parameters: 
                 $S$, $A$, $s_0$, $H$;
                 
                \quad\; tuning parameters $\betaR, \betaP$.
        } 
        \For{round $t = 1, \ldots, T$}
            \State{compute using the LSR oracle:
           $$
                \widehat{f}_t \in 
                \arg\min_{f \in \F}
                \sum_{i=1}^{t-1} \sum_{h=0}^{H-1} ( f(c_i,s^i_h,a^i_h) - r^i_h)^2
            $$
            } \label{ln:lsr-orace}
            \State{
            computed using the LLR oracle: 
            $$
                \widehat{P}_t \in 
                \arg\min_{P \in \Fp}
                \sum_{i=1}^{t-1} \sum_{h=0}^{H-1} \log \brk*{ \frac{1}{P^{c_i}(s^i_{h+1}|s^i_h,a^i_h)}}
            $$
            } \label{ln:lrr-oracle}
            \State{observe a fresh context $c_t \sim \D$.}
            \For{$k= 1 ,2, \ldots, t$}\label{ln:loop}
                \State{compute  for all $h \in [H]$ and ${(s,a) \in S_h \times  A}$: 
                \begin{align*}
                    &\widehat{r}_k^{c_t}(s, a) 
                    = 
                    \\
                    &\widehat{f}_k(c_t,s,a)
                     +
                    b^{\betaR}_k(c_t,s,a,h) +
                    H b^{\betaP}_k(c_t,s,a,h)
                \end{align*}
                where
                \begin{align*}
                    \qquad &b^\beta_k(c,s,a,h) 
                    =
                    \\
                    &\min\Bigg(
                    1
                    ,
                    \frac{\beta/2}{1 + \sum_{i=1}^{k-1}q_h(s,a| \pi_i(c;\cdot), \widehat{P}^c_i)}
                    \Bigg)
                    .
               \end{align*}
                }\label{ln:bonuses}
                \State{define $\Mhat_k(c_t) = (S, A, \widehat{P}^{c_t}_k, \widehat{r}^{c_t}_k, s_0, H)$.}
                \State{compute using a planning algorithm:
                \begin{align*}
                \pi_k(c_t;\cdot) \in \arg\max_{\pi :S \to A
                }V^{\pi}_{\Mhat_k(c_t)}(s_0)
                .
                \end{align*}}
            \EndFor{}    
            \State{play $\pi_t(c_t;\cdot)$ and observe a trajectory 
            $
            \sigma^t = 
            {(c_t,\linebreak[1] s^t_0, a^t_0, r^t_0, s^t_1, \ldots, s^t_{H-1}, a^t_{H-1}, r^t_{H-1},s^t_H)}
            $.}
            \EndFor{}
    \end{algorithmic}
\end{algorithm}

The following is our main result for \cref{alg:UCCRL-main}. We sketch its proof in \cref{sec:analysis}, and defer the complete proof to \cref{sec:regret-proof}.
\begin{theorem}[E-UC$^3$RL regret bound]\label{thm:UC3RL-regret-bound}
    For any $T > 1$ and $\delta \in (0,1)$, suppose we run \cref{alg:UCCRL-main} with parameters 
    \begingroup\allowdisplaybreaks
    \begin{align*}
        &\betaR
        =
        \sqrt{\frac{504 T H^2 \dEP \log^2(64 T^4 H \abs{\F} |\Fp|/\delta^2)}{|S||A| \log(T+1)}}
        ,
        \\
        &\betaP
        =
        \sqrt{\frac{2029 T H^2 \dEP \log^2(8 T H |\Fp| /\delta)}{|S||A| \log(T+1)}}
        ,
    \end{align*}
    \endgroup
    and $\dEP\geq \dHE(\Fp, D_H, T^{-1/2})$.
    Then, with probability at least $1-\delta$ it holds that
    \begingroup\allowdisplaybreaks
    \begin{align*}
        \Regrv_T(&\text{E-UC$^3$RL}) 
        \leq
        \widetilde{O}
        \brk{
        H^3
        \sqrt{
        T \abs{S} \abs{A}
        \dEP
        \left(\log \brk{\abs{\F}\abs{\Fp}/\delta}  
        \right)
        }
        }
        .
    \end{align*}
    \endgroup
    %
\end{theorem}

We remark that using covering numbers analysis~\citep{shalev2014UnderstandingMLBook}, our result naturally generalizes to infinite function classes as well as context spaces.
In addition, when comparing our regret upper bound to the lower bound of~\citet{levy2022optimism}, there is an apparent gap of $H^{2.5}$ and 
$\dEP$ factors. We leave this gap for future research.

\noindent\textbf{Computational efficiency of E-UC$^3$RL.}
{
The algorithm calls each oracle $T$ times, making it oracle-efficient (since it's oracle-call complexity is in $poly(T)$).
Aside from simple arithmetic operations, each of the $T(T+1)/2$ iterations of the internal loop call one MDP planning procedure and calculate the related occupancy measure. Both of these can be implemented efficiently using dynamic programming. 
%
%
Overall, excluding the oracle's computation time, the run-time complexity of our algorithm is in $\poly(T,|S|,|A|,H)$. 
Hence, if both the LSR and LLR oracles are computationally efficient then E-UC$^3$RL is also computationally efficient.
}


\section{Analysis} \label{sec:analysis}
Our analysis consists of four main steps:
\begin{enumerate}[nosep,label=(\roman*)]
    \item Establish an upper bound on the expected regret of the square and log loss regression oracles;
    \item Construct confidence bounds over the expected value of any context-dependent policy for both dynamics and rewards;
    \item Define the optimistic approximated CMDP and establish optimism lemmas;
    \item Combine the above to derive a high probability regret bound. 
\end{enumerate}

\noi
In what follows, we present the main claims of our analysis, deferring the proofs to \cref{sec:proofs}.
Before beginning, we discuss some of the challenges and present a key technical result, the value change of measure lemma (\cref{lemma:value-change-of-measure-main}).

\subsection*{A Key Technical Challenge}
Our goal is to derive computable and reliable confidence bounds over the expected value of any policy. The difficulty is that the offline regression oracles have regret guarantees only with respect to the trajectories' distributions, which are related to the true context-dependent dynamics $P_\star$. Hence, a main technical challenge is to translate the oracle's regret to a guarantee with respect to the estimated context-dependent dynamics $\widehat{P}_t$. Notice that the confidence bounds are computable only if stated in terms of $\widehat{P}_t$. 
Following ideas from \citet{foster2021statistical},
we solve this issue using a multiplicative value change of measure that is based on the Hellinger distance.
Concretely, the following change of measure lemma allows us to measure the value difference caused by the use of approximated transition probabilities in terms of the expected cumulative Hellinger distance (proof in \cref{sec:proof-change-of-measure}).
\begin{lemma}[Value change of measure]
\label{lemma:value-change-of-measure-main}
    Let $r: S \times A \to [0,1]$ be a bounded expected rewards function. Let $P_\star$ and $\widehat{P}$ denote two dynamics and consider the MDPs $M  = (S,A,P_\star,r, s_0,H)$ and $\widehat{M}  = (S,A,\widehat{P},r, s_0,H)$.
    Then, for any policy $\pi$ it holds that
    \begin{align*}
        &
        V^\pi_{\widehat{M}}(s)
        \le
        3 V^\pi_{{M}}(s) +
        \\
        &
        9 H^2
        \mathop{\E}_{P_\star, \pi}
        \brk[s]*{
        \sum_{h=0}^{H-1}
        D_H^2(\widehat{P}(\cdot|s_{h},a_{h}), {P}_\star(\cdot|s_{h},a_{h}))
        ~\bigg|~ s_{0} = s
        }
        .
    \end{align*}
\end{lemma}
Notice that this bound is loose when the reward function is not small. However, it is significantly tighter than standard results when the rewards are small. 
For instance, later in the analysis we consider the reward
$
r^c(s,a) = (\widehat{f}_t(c,s,a) - f_\star(c,s,a))^2
$
that is the squared reward approximation error. Letting $\widehat{\M}= (S, A, \widehat{P}, r^c, s_0, H)$ and $\M = (S, A, {P}_\star, r^c, s_0, H)$, \cref{lemma:value-change-of-measure-main} implies that the expected reward approximation error with respect to $\widehat{P}$ is at most a constant multiple of the expected reward and dynamics approximation errors with respect to $P_\star$. 
In contrast, a standard change of measure replaces the squared Hellinger distance with Total Variation (TV) whose cumulative error scales as $\smash{\sqrt{T}}$.


\subsection*{Step 1: Establishing Oracle Guarantees}

The regret guarantees of the least-squares oracle were established in \citet[Lemma B.10]{levy2022optimism}, stated in the Appendix as \cref{lemma:uniform-convergence}.
The following corollary bounds the cumulative expected least-squares loss of the sequence of the oracle's predictions (proof in \cref{sec:proofs-oracles}). 
In the following, we denote the expected squared error at round $t$ over a trajectory generated by $\pi$ where the context is $c$ as $\Esq[\pi]$. Formally,
\begin{align*}
    &
    \Esq[\pi] :=
    \\
    &\hspace{4mm}
    \mathop{\E}_{\pi(c;\cdot), P^{c}_\star} \Bigg[
    \sum_{h=0}^{H-1}
    \brk*{
    \widehat{f}_t(c, s_h, a_h) - f_\star(c, s_h, a_h)
    }^2
    ~\Bigg|~ s_0\Bigg]
    .
\end{align*}
\begin{corollary}[Reward approximation bound]\label{corl:reward-distance-bound-w.h.p-main}
    Let $\widehat{f}_t \in \F$ be the least squares minimizer of Line~\ref{ln:lsr-orace} in \cref{alg:UCCRL-main}.
    For any $\delta \in (0,1)$ it holds with probability at least $1-\delta$,
    \begin{align*}
        {\E}_c \brk[s]*{
        \sum_{i=1}^{t-1}
        \Esq
        }
        \leq
        68H \log(2 T^3 |\F|/\delta)
        ,
        \qquad
        \forall t \ge 1
        .
    \end{align*}
\end{corollary}

Next, we analyze the expected regret of the dynamic's log-loss oracle in terms of the Hellinger distance. 
%
The following result is a straightforward application of Lemma A.14 in~\citet{foster2021statistical} (proof in \cref{sec:proofs-oracles}).
%
%
%
%
%
%
To that end, we denote the expected squared Hellinger distance at round $t$ over a trajectory generated by $\pi$ where the context is $c$ as $\Ehelt[\pi]$. Formally,
\begin{align*}
   & \Ehelt[\pi]:=
    \\
    &\hspace{4mm}
    \mathop{\E}_{\pi(c;\cdot), P^{c}_\star} \Bigg[
        \sum_{h=0}^{H-1}D^2_H(P^{c}_\star(\cdot|s_{h},a_{h}),\widehat{P}_t^{c}(\cdot|s_{h},a_{h}))
        ~\Bigg|~ s_0\Bigg]
        .
\end{align*}
\begin{corollary}[Dynamics approximation bound]\label{corl:hellinger-distance-bound-w.h.p-main}
    Let ${\widehat{P}_t \in \Fp}$ be the log loss minimizer of Line~\ref{ln:lrr-oracle} in \cref{alg:UCCRL-main}.
    For any $\delta \in (0,1)$ it holds that with probability at least $1-\delta$,
    \begin{align*}
        {\E}_c \brk[s]*{
        \sum_{i=1}^{t-1}
        \Ehelt
        }
        \leq
        2H \log(TH  |\Fp|/\delta)
        ,
        \qquad
        \forall t \ge 1
        .
    \end{align*}
    
\end{corollary}

Lastly, we apply a variant of ~\cref{corl:hellinger-distance-bound-w.h.p-main} together with the shrinking confidence bound guarantee in \cref{lemma:russo-lemma2} to derive the following bound (proof in \cref{sec:proofs-oracles}).

\begin{lemma}[Stability error of log-loss oracle]\label{lemma:log-loss-regret}
    Let $\widehat{P}_i \in \Fp$ denote the log-loss minimizer at round $i \in [T]$.
    %
    For any $\delta \in (0,1)$ it holds with probability at least $1-\delta$ that
    \begin{align*}
        &\E_c \brk[s]*{ \sum_{i=1}^{t-1}
        \Eheli
        }
        \leq
        112 H 
        \dEP
        \log^2(2TH  |\Fp|/\delta)
        ,
        \;
        \forall t \ge 1
        ,
    \end{align*}
    where $\dEP \ge \dHE(\Fp, D_H, T^{-1/2})$, the Eluder dimension of $\Fp$ at scale $T^{-1/2}$.
\end{lemma}

One of the novel contributions of our work is the use of the Eluder dimension to bound the stability error due to the log-loss oracle. This allows us to choose stable bonuses that yield valid and computable confidence bounds.


\subsection*{Step 2: Constructing Confidence Bounds}
Our main goal in this subsection is to upper bound w.h.p the expected value difference between the true and the empirical CMDPs, for any context-dependent policy $\pi$ (\cref{corl:UCB-value-main}).
For that purpose, we derive confidence bounds over the rewards and dynamics approximation. 
Let $\pi_t$ denote the context-dependent policy selected at round $t$.
%
For any ${h \in [H]}$ and state-action pair $(s,a) \in S_h \times A$, context $c \in \C$, and round $t \ge 1$, we define the reward bonuses 
%
%
\begin{equation}
\label{eq:reward-bonuses}
\begin{aligned}
    b^R_{t,h}(c,s,a) 
    &: =
    b^{\betaR}_t(c,s,a,h)
    ,
     \\
    b^P_{t,h}(c,s,a) 
    &
    : = 
    Hb^{\betaP}_t(c,s,a,h)
    ,
 \end{aligned}
\end{equation}
where $b^\beta_t(c,s,a,h)$ is defined in Line~\ref{ln:bonuses}.
$b^R_{t,h}$ is the bonus related to the rewards approximation error, and $ b^P_{t,h}$ is the bonus related to that of the dynamics.
We remark that these bonuses differ only in constant terms ($H\betaP$ versus $\betaR$), and are identical  to the bonus terms defined in~\cref{alg:UCCRL-main} 
 (Line~\ref{ln:bonuses}).
We use these bonuses in our optimistic construction to account for the approximation errors in the rewards and dynamics, respectively.
Next, for any context $c \in \C$ and functions $f \in \F, P \in \Fp$ we define the MDP $\M^{(f,P)}(c) = (S,A,P^c,f(c,\cdot,\cdot),s_0,H)$.
The following results derive confidence bounds for the dynamics and rewards approximation in terms of the reward bonuses and approximation errors (proofs in \cref{sec:proofs-confidence-bounds}).



\begin{lemma}[Confidence bound for rewards approximation w.r.t the approximated dynamics]\label{lemma:CB-rewards-approx-p-main}
    Let $P_\star$ and $f_\star$ be the true context-dependent dynamics and rewards. Let $\widehat{P}_t$ and $\widehat{f}_t$ be the approximated context-dependent dynamics and rewards at round $t$.
    Then, 
    for any $t\geq 1$, and context-dependent policy $\pi \in \Pi_\C$, 
    \begingroup \allowdisplaybreaks
    \begin{align*}
        &\abs*{
        \E_c\brk[s]*{V^{\pi(c;\cdot)}_{\M^{(f_\star,\widehat{P}_t)}(c)}(s_0) }
        -
         \E_c\brk[s]*{V^{\pi(c;\cdot)}_{\M^{(\widehat{f}_t, \widehat{P}_t)}(c)}(s_0)} 
        }
        \leq
        \frac{H}{2 \betaR}
        \\
        &
        \hspace{1em}
        +
        \frac{3}{2\betaR}
        \E_c \brk[s]*{ \sum_{i=1}^{t-1} 
        \Esq
        }
        +
        \frac{9 H^2}{2\betaR} \E_c \brk[s]*{ \sum_{i=1}^{t-1} 
        \Eheli
        }
        \\
        &
        \hspace{1em}
        +
        \E_c\brk[s]*{
        \sum_{h=0}^{H-1}
        \sum_{s \in S^c_h} \sum_{a \in A}
        q_h(s,a|\pi(c;\cdot),\widehat{P}^c_t)
        \cdot b_{t,h}^R(c,s,a)
        }
        .
    \end{align*}
    \endgroup
\end{lemma}

\begin{lemma}[Confidence bound for dynamics approximation w.r.t the true rewards $f_\star$]\label{lemma:CB-dynamics-true-r-main} 
    Let $P_\star$ and $f_\star$ be the true context-dependent dynamics and rewards. Let $\widehat{P}_t$ be the approximated context-dependent dynamics at round $t$.
    Then, 
    for any $t\geq 1$, and context-dependent policy $\pi \in \Pi_\C$, 
    \begingroup
    \allowdisplaybreaks
    \begin{align*}  
        &
        \abs*{
        \E_c\brk[s]*{V^{\pi(c;\cdot)}_{\M^{({f}_\star,P_\star)}(c)}(s_0) 
        }
        -
        \E_c\brk[s]*{
        V^{\pi(c;\cdot)}_{\M^{({f}_\star, \widehat{P}_t)}(c)}(s_0)}
        }
        \leq 
        \frac{H^2}{2 \betaP}
        \\
        &\hspace{1em}
        +
        \E_c\brk[s]*{
        \sum_{h=0}^{H-1}
        \sum_{s \in S_h^c}
        \sum_{a \in A}
        q_h(s,a| \pi(c;\cdot), \widehat{P}^c_t )
        \cdot b^P_{t,h}(c,s,a)
        }
        \\
        &
        \hspace{1em}
        +
        \frac{6 H}{ \betaP}
        \E_c\brk[s]*{\sum_{i=1}^{t-1} 
        \Ehelt}
        +
        \frac{ 18 H^3}{\betaP}
       \E_c\brk[s]*{\sum_{i=1}^{t-1} 
        \Eheli
        }
        .
    \end{align*}
    \endgroup
\end{lemma}
The proofs of \cref{lemma:CB-rewards-approx-p-main,lemma:CB-dynamics-true-r-main} are similar and can be found in~\cref{sec:proofs-confidence-bounds}.
By applying the high probability approximation bounds in \cref{corl:reward-distance-bound-w.h.p-main,corl:hellinger-distance-bound-w.h.p-main,lemma:log-loss-regret} to \cref{lemma:CB-rewards-approx-p-main,lemma:CB-dynamics-true-r-main}, we obtain the desired high probability confidence bound on the expected value approximation (proof in \cref{sec:proofs-confidence-bounds}).
\begin{corollary}\label{corl:UCB-value-main}
    Under the terms of \cref{lemma:CB-rewards-approx-p-main,lemma:CB-dynamics-true-r-main}, the following holds with probability at least $1-3\delta/4$ simultaneously for all $t \geq 1$ and $\pi \in \Pi_\C$.
    \begingroup \allowdisplaybreaks
    \begin{align*}
        &
        \abs*{
        \E_c\brk[s]*{ V^{\pi(c;\cdot)}_{\M^{(f_\star,{P}_\star)}(c)}(s_0)
         }
         -
          \E_c\brk[s]*{
         V^{\pi(c;\cdot)}_{\M^{(\widehat{f}_t, \widehat{P}_t)}(c)}(s_0) }
        }
        \le
        \\
        &
        \E_c\big[
        \sum_{h=0}^{H-1}
        \sum_{s_h \in S^c_h} \sum_{a_h \in A}
        q_h(s_h,a_h|\pi(c;\cdot),\widehat{P}^c_t)
        b_{t,h}^R(c,s_h,a_h)
        \big] 
        \\
        &
        +
        \E_c\big[
        \sum_{h=0}^{H-1}
        \sum_{s_h \in S^c_h} \sum_{a_h \in A}
        q_h(s_h,a_h|\pi(c;\cdot),\widehat{P}^c_t)
        b_{t,h}^P(c,s_h,a_h)
        \big]
        \\
        &
        +
        \frac{2029 H^4 \dEP}{\betaP}
        \log^2(8TH  |\Fp|/\delta)
        \\
        &
        +
        \frac{504 H^3 \dEP }{\betaR} 
        \log^2(64T^4 H  |\F| |\Fp|/\delta^2)
        .
    \end{align*}
    \endgroup
\end{corollary}

\subsection*{Step 3: Establishing Optimism Lemmas}
In this subsection,
we use the results of step $2$ (\cref{corl:UCB-value-main}) to establish the properties of the optimistic approximated CMDP $\Mhat_t$. Namely, that its optimal value is higher than that of the true CMDP $\M$ (\cref{lemma:opt-in-expectation-main}), but also that the optimistic value of $\pi_t$ is not significantly higher than its true value (\cref{lemma:optimism-cost-main}), in expectation over the context. Combining both results and applying Azuma's inequality yields the regret bound.
%

We begin by defining the optimistic-in-expectation context-dependent reward function at every round $t \ge 1$ as 
$
    \widehat{r}^c_t (s,a) :=
    \widehat{f}_t (c,s,a) + b^R_{t,h}(c,s,a) +  b^P_{t,h}(c,s,a)
$    
, 
where the bonuses are defined in \cref{eq:reward-bonuses},
and we note that $\widehat{r}^c_t (s,a) \in [0,H+2]$ for all $h \in [H]$ and $(c,s,a) \in \C \times S_h \times A$.
The approximated optimistic-in-expectation CMDP at round $t$ is defined as $(\C,S,A,\Mhat_t)$
where for any context $c \in \C$ we define 
$
    \Mhat_t(c) := (S,A,\widehat{P}^c, \widehat{r}^c, s_0, H)
$.
We also recall that $\M(c) = \M^{(f_\star,P_\star)}(c)$ is the true CMDP.
%
%
%
%
%
%
%
%
%
The next two Lemmas establish the properties of the optimistic CMDP
(proofs in \cref{sec:proofs-optimism}).  
\begin{lemma}[Optimism in expectation]\label{lemma:opt-in-expectation-main}
    Let $\pi_\star$ be an optimal context-dependent policy for $\M$. 
    Under the good event of \cref{corl:UCB-value-main}, for any $t \geq 1$ it holds that
    \begingroup \allowdisplaybreaks
    \begin{align*}
        &\E_c\left[V^{\pi_\star(c;\cdot)}_{\M(c)}(s_0)
        \right]
        \leq
        \E_c \left[ 
        V^{\pi_t(c;\cdot)}_{\Mhat_t(c)}(s_0)\right]
        \\
        &
        +
        \frac{2029 H^4 \dEP}{\betaP}
        \log^2\brk*{{8TH  |\Fp|}/{\delta}}
        \\
        &
        +
        \frac{504 H^3 \dEP }{\betaR} 
        \log^2\brk*{{64T^4 H  |\F| |\Fp|}/{\delta^2}}
        .
    \end{align*}
    \endgroup
\end{lemma}

\begin{lemma}[The cost of approximation]\label{lemma:optimism-cost-main}
    Under the good event of \cref{corl:UCB-value-main}, we have that for every $t \geq 1$
    \begingroup\allowdisplaybreaks
    \begin{align*}
        &\E_c\left[ V^{\pi_t(c;\cdot)}_{\Mhat_t(c)}(s_0)\right]
        \leq
        \E_c\left[ V^{\pi_t(c;\cdot)}_{\M(c)}(s_0)\right]
        \\
        &
        +
        2 \sum_{h=0}^{H-1}\E_{c} \brk[s]*{
        \mathop{\E}_{\pi_t(c;\cdot), \widehat{P}_t^c} \brk[s]*{
        b^R_{t,h}(c,s_h,a_h) + b^R_{t,h}(c,s_h,a_h)
        }
        }
        \\
        &
        +
        \frac{2029 H^4 \dEP}{\betaP}
        \log^2(8TH  |\Fp|/\delta)
        \\
        &
        +
        \frac{504 H^3 \dEP }{\betaR} 
        \log^2(64T^4 H  |\F| |\Fp|/\delta^2)
        .
    \end{align*}
    \endgroup
\end{lemma}


\subsection*{Step 4: Deriving the Regret Bound}
Using the above results, we derive~\cref{thm:UC3RL-regret-bound} as follows.
Summing \cref{lemma:opt-in-expectation-main,lemma:optimism-cost-main} over $1 \le t \le T$ bounds a notion of expected regret.
Next, we use a standard algebraic argument (\cref{lemma:cummulative-bonuses}) to bound the expected cumulative bonuses as
\begin{align*}
    &\sum_{t=1}^{T}\sum_{h=0}^{H-1}\E_{c} \brk[s]*{
    \mathop{\E}_{\pi_t(c;\cdot), \widehat{P}_t^c} \brk[s]*{
    b^R_{t,h}(c,s_h,a_h) + b^R_{t,h}(c,s_h,a_h)
    }
    }
    \\
    & \le
    H|S||A| \brk{\betaR + H \betaP} \log(T+1)
    .
\end{align*}
%
By plugging in our choice of $\betaP$ and $\betaR$ we bound the expected regret by
\[
        271 H^3
        \sqrt{ T \abs{S} \abs{A}
        \dEP
        \log (T+1)
        \log^2 \brk{18 T^4 H \abs{\F}\abs{\Fp}/ \delta^2}
        }
        .
\]
We derive the high probability result by using \cref{corl:UCB-value-main} and applying Azuma's inequality.
The complete proof is in \cref{sec:regret-proof}.

\section{Discussion and Conclusion}
In this paper, we make a step forward in understanding RL with offline function approximation. We consider the tabular CMDP setting, under the offline function approximation assumption, and obtain a rate-optimal regret bound.
To obtain our result, we extend the Eluder dimension presented by~\citet{russo2013eluder} to a general bounded metric, rather than only the $\ell_2$ norm. This result may be of separate interest. Further, by applying the Metric Eluder dimension with Hellinger distance, we obtain 
our algorithm EUC$^3$RL, the first efficient algorithm for regret minimization in Contextual MDPs that uses offline regression oracles. We note that our algorithm requires a known bound on the Eluder dimension and its regret depends on it. 
Obtaining an efficient algorithm that has rate-optimal regret using offline oracles but without dependence on the Eluder dimension is an important open question for future research. Extending our technique to RL with rich observations is also an interesting direction for future research.

\section*{Acknowledgements}
We would like to thank the reviewers for their helpful comments.

This project has received funding from the European Research Council (ERC) under the European Union’s Horizon
2020 research and innovation program (grant agreement No.
882396 and grant agreement No. 101078075). Views and opinions expressed are
however those of the author(s) only and do not necessarily
reflect those of the European Union or the European Research Council. Neither the European Union nor the granting authority can be held responsible for them. 
This work received additional support from the Israel Science Foundation (ISF, grant numbers 993/17 and 2549/19), Tel Aviv University Center for AI
and Data Science (TAD), the Yandex Initiative for Machine
Learning at Tel Aviv University, the Len Blavatnik and the
Blavatnik Family Foundation, and by the Israeli VATAT data science scholarship.

AC is supported by the Israeli Science Foundation (ISF)
grant no. 2250/22.

\section*{Impact Statement}
This paper presents work whose goal is to advance the field of 
Machine Learning. There are many potential societal consequences 
of our work, none of which we feel must be specifically highlighted here.

\bibliography{references}

\begin{thebibliography}{39}
\providecommand{\natexlab}[1]{#1}
\providecommand{\url}[1]{\texttt{#1}}
\expandafter\ifx\csname urlstyle\endcsname\relax
  \providecommand{\doi}[1]{doi: #1}\else
  \providecommand{\doi}{doi: \begingroup \urlstyle{rm}\Url}\fi

\bibitem[Agarwal et~al.(2012)Agarwal, Dud{\'\i}k, Kale, Langford, and
  Schapire]{agarwal2012contextual}
Agarwal, A., Dud{\'\i}k, M., Kale, S., Langford, J., and Schapire, R.
\newblock Contextual bandit learning with predictable rewards.
\newblock In \emph{Artificial Intelligence and Statistics}, pp.\  19--26. PMLR,
  2012.

\bibitem[Agarwal et~al.(2014)Agarwal, Hsu, Kale, Langford, Li, and
  Schapire]{agarwal2014taming}
Agarwal, A., Hsu, D., Kale, S., Langford, J., Li, L., and Schapire, R.
\newblock Taming the monster: A fast and simple algorithm for contextual
  bandits.
\newblock In \emph{International Conference on Machine Learning}, pp.\
  1638--1646. PMLR, 2014.

\bibitem[Ayoub et~al.(2020)Ayoub, Jia, Szepesvari, Wang, and
  Yang]{Ayoub2020Value-Targeted-Eluder}
Ayoub, A., Jia, Z., Szepesvari, C., Wang, M., and Yang, L.
\newblock Model-based reinforcement learning with value-targeted regression.
\newblock In III, H.~D. and Singh, A. (eds.), \emph{Proceedings of the 37th
  International Conference on Machine Learning}, volume 119 of
  \emph{Proceedings of Machine Learning Research}, pp.\  463--474. PMLR, 13--18
  Jul 2020.

\bibitem[Chen et~al.(2022)Chen, Li, Yuan, Gu, and Jordan]{chen2022general}
Chen, Z., Li, C.~J., Yuan, A., Gu, Q., and Jordan, M.~I.
\newblock A general framework for sample-efficient function approximation in
  reinforcement learning.
\newblock \emph{arXiv preprint arXiv:2209.15634}, 2022.

\bibitem[Dann et~al.(2021)Dann, Mohri, Zhang, and Zimmert]{dann2021provably}
Dann, C., Mohri, M., Zhang, T., and Zimmert, J.
\newblock A provably efficient model-free posterior sampling method for
  episodic reinforcement learning.
\newblock \emph{Advances in Neural Information Processing Systems},
  34:\penalty0 12040--12051, 2021.

\bibitem[Foster \& Rakhlin(2020)Foster and Rakhlin]{foster2020beyond}
Foster, D. and Rakhlin, A.
\newblock Beyond ucb: Optimal and efficient contextual bandits with regression
  oracles.
\newblock In \emph{International Conference on Machine Learning}, pp.\
  3199--3210. PMLR, 2020.

\bibitem[Foster et~al.(2018)Foster, Agarwal, Dudik, Luo, and
  Schapire]{foster2018practical}
Foster, D., Agarwal, A., Dudik, M., Luo, H., and Schapire, R.
\newblock Practical contextual bandits with regression oracles.
\newblock In \emph{International Conference on Machine Learning}, pp.\
  1539--1548. PMLR, 2018.

\bibitem[Foster \& Krishnamurthy(2021)Foster and
  Krishnamurthy]{foster2021efficient}
Foster, D.~J. and Krishnamurthy, A.
\newblock Efficient first-order contextual bandits: Prediction, allocation, and
  triangular discrimination.
\newblock \emph{Advances in Neural Information Processing Systems},
  34:\penalty0 18907--18919, 2021.

\bibitem[Foster et~al.(2021)Foster, Kakade, Qian, and
  Rakhlin]{foster2021statistical}
Foster, D.~J., Kakade, S.~M., Qian, J., and Rakhlin, A.
\newblock The statistical complexity of interactive decision making.
\newblock \emph{arXiv preprint arXiv:2112.13487}, 2021.

\bibitem[Hallak et~al.(2015)Hallak, Di~Castro, and
  Mannor]{hallak2015contextual}
Hallak, A., Di~Castro, D., and Mannor, S.
\newblock Contextual markov decision processes.
\newblock \emph{arXiv preprint arXiv:1502.02259}, 2015.

\bibitem[Jiang et~al.(2017)Jiang, Krishnamurthy, Agarwal, Langford, and
  Schapire]{jiang2017contextual}
Jiang, N., Krishnamurthy, A., Agarwal, A., Langford, J., and Schapire, R.~E.
\newblock Contextual decision processes with low bellman rank are
  pac-learnable.
\newblock In \emph{International Conference on Machine Learning}, pp.\
  1704--1713. PMLR, 2017.

\bibitem[Jin et~al.(2021)Jin, Liu, and Miryoosefi]{jin2021BellmanEluder}
Jin, C., Liu, Q., and Miryoosefi, S.
\newblock Bellman eluder dimension: New rich classes of rl problems, and
  sample-efficient algorithms.
\newblock In Ranzato, M., Beygelzimer, A., Dauphin, Y., Liang, P., and Vaughan,
  J.~W. (eds.), \emph{Advances in Neural Information Processing Systems},
  volume~34, pp.\  13406--13418. Curran Associates, Inc., 2021.

\bibitem[Langford \& Zhang(2007)Langford and Zhang]{Langford2007}
Langford, J. and Zhang, T.
\newblock The epoch-greedy algorithm for multi-armed bandits with side
  information.
\newblock In Platt, J., Koller, D., Singer, Y., and Roweis, S. (eds.),
  \emph{Advances in Neural Information Processing Systems}, volume~20. Curran
  Associates, Inc., 2007.

\bibitem[Lattimore \& Szepesv{\'a}ri(2020)Lattimore and
  Szepesv{\'a}ri]{MAB-book}
Lattimore, T. and Szepesv{\'a}ri, C.
\newblock \emph{Bandit Algorithms}.
\newblock Cambridge University Press, 2020.

\bibitem[Levy \& Mansour(2022)Levy and Mansour]{levy2022learning}
Levy, O. and Mansour, Y.
\newblock Learning efficiently function approximation for contextual {MDP}.
\newblock \emph{arXiv preprint arXiv:2203.00995}, 2022.

\bibitem[Levy \& Mansour(2023)Levy and Mansour]{levy2022optimism}
Levy, O. and Mansour, Y.
\newblock Optimism in face of a context: Regret guarantees for stochastic
  contextual mdp.
\newblock In \emph{Proceedings of the AAAI Conference on Artificial
  Intelligence}, volume~37, pp.\  8510--8517, 2023.

\bibitem[Levy et~al.(2023)Levy, Cohen, Cassel, and
  Mansour]{levy2023adversarialCMDPs}
Levy, O., Cohen, A., Cassel, A.~B., and Mansour, Y.
\newblock Efficient rate optimal regret for adversarial contextual mdps using
  online function approximation.
\newblock In \emph{ICML}, 2023.

\bibitem[Liu et~al.(2022)Liu, Chung, Szepesv{\'a}ri, and Jin]{liu2022partially}
Liu, Q., Chung, A., Szepesv{\'a}ri, C., and Jin, C.
\newblock When is partially observable reinforcement learning not scary?
\newblock In \emph{Conference on Learning Theory}, pp.\  5175--5220. PMLR,
  2022.

\bibitem[Mannor et~al.(2022)Mannor, Mansour, and Tamar]{MannorMT-RLbook}
Mannor, S., Mansour, Y., and Tamar, A.
\newblock \emph{Reinforcement Learning: Foundations}.
\newblock Online manuscript;
  \url{https://sites.google.com/view/rlfoundations/home}, 2022.
\newblock accessed March-05-2023.

\bibitem[Modi \& Tewari(2020)Modi and Tewari]{modi2020no}
Modi, A. and Tewari, A.
\newblock No-regret exploration in contextual reinforcement learning.
\newblock In \emph{Conference on Uncertainty in Artificial Intelligence}, pp.\
  829--838. PMLR, 2020.

\bibitem[Modi et~al.(2018)Modi, Jiang, Singh, and Tewari]{modi2018markov}
Modi, A., Jiang, N., Singh, S., and Tewari, A.
\newblock Markov decision processes with continuous side information.
\newblock In \emph{Algorithmic Learning Theory}, pp.\  597--618. PMLR, 2018.

\bibitem[Osband \& Van~Roy(2014)Osband and Van~Roy]{osband2014model}
Osband, I. and Van~Roy, B.
\newblock Model-based reinforcement learning and the eluder dimension.
\newblock \emph{Advances in Neural Information Processing Systems}, 27, 2014.

\bibitem[Puterman(2014)]{puterman2014markov}
Puterman, M.~L.
\newblock \emph{Markov decision processes: discrete stochastic dynamic
  programming}.
\newblock John Wiley \& Sons, 2014.

\bibitem[Rosenberg et~al.(2020)Rosenberg, Cohen, Mansour, and
  Kaplan]{rosenberg2020near}
Rosenberg, A., Cohen, A., Mansour, Y., and Kaplan, H.
\newblock Near-optimal regret bounds for stochastic shortest path.
\newblock In \emph{International Conference on Machine Learning}, pp.\
  8210--8219. PMLR, 2020.

\bibitem[Russo \& Van~Roy(2013)Russo and Van~Roy]{russo2013eluder}
Russo, D. and Van~Roy, B.
\newblock Eluder dimension and the sample complexity of optimistic exploration.
\newblock \emph{Advances in Neural Information Processing Systems}, 26, 2013.

\bibitem[Shalev-Shwartz \& Ben-David(2014)Shalev-Shwartz and
  Ben-David]{shalev2014UnderstandingMLBook}
Shalev-Shwartz, S. and Ben-David, S.
\newblock \emph{Understanding machine learning: From theory to algorithms}.
\newblock Cambridge university press, 2014.

\bibitem[Shani et~al.(2020)Shani, Efroni, Rosenberg, and
  Mannor]{efroni2020optimistic}
Shani, L., Efroni, Y., Rosenberg, A., and Mannor, S.
\newblock Optimistic policy optimization with bandit feedback.
\newblock In \emph{ICML}, 2020.

\bibitem[Simchi-Levi \& Xu(2021)Simchi-Levi and Xu]{simchi2021bypassing}
Simchi-Levi, D. and Xu, Y.
\newblock Bypassing the monster: A faster and simpler optimal algorithm for
  contextual bandits under realizability.
\newblock \emph{Mathematics of Operations Research}, 2021.

\bibitem[Slivkins(2019)]{Slivkins-book}
Slivkins, A.
\newblock Introduction to multi-armed bandits.
\newblock \emph{Found. Trends Mach. Learn.}, 12\penalty0 (1-2):\penalty0
  1--286, 2019.

\bibitem[Sun et~al.(2019)Sun, Jiang, Krishnamurthy, Agarwal, and
  Langford]{sun2019model}
Sun, W., Jiang, N., Krishnamurthy, A., Agarwal, A., and Langford, J.
\newblock Model-based rl in contextual decision processes: Pac bounds and
  exponential improvements over model-free approaches.
\newblock In \emph{Conference on learning theory}, pp.\  2898--2933. PMLR,
  2019.

\bibitem[Sutton \& Barto(2018)Sutton and Barto]{Sutton2018}
Sutton, R.~S. and Barto, A.~G.
\newblock \emph{Reinforcement Learning: An Introduction}.
\newblock The MIT Press, second edition, 2018.

\bibitem[Wang et~al.(2020{\natexlab{a}})Wang, Salakhutdinov, and
  Yang]{Wang2020RLEluder}
Wang, R., Salakhutdinov, R.~R., and Yang, L.
\newblock Reinforcement learning with general value function approximation:
  Provably efficient approach via bounded eluder dimension.
\newblock In Larochelle, H., Ranzato, M., Hadsell, R., Balcan, M., and Lin, H.
  (eds.), \emph{Advances in Neural Information Processing Systems}, volume~33,
  pp.\  6123--6135. Curran Associates, Inc., 2020{\natexlab{a}}.

\bibitem[Wang et~al.(2020{\natexlab{b}})Wang, Salakhutdinov, and
  Yang]{wang2020reinforcement}
Wang, R., Salakhutdinov, R.~R., and Yang, L.
\newblock Reinforcement learning with general value function approximation:
  Provably efficient approach via bounded eluder dimension.
\newblock \emph{Advances in Neural Information Processing Systems},
  33:\penalty0 6123--6135, 2020{\natexlab{b}}.

\bibitem[Wen \& Van~Roy(2017)Wen and Van~Roy]{wen2017efficient}
Wen, Z. and Van~Roy, B.
\newblock Efficient reinforcement learning in deterministic systems with value
  function generalization.
\newblock \emph{Mathematics of Operations Research}, 42\penalty0 (3):\penalty0
  762--782, 2017.

\bibitem[Wu et~al.(2023)Wu, He, and Gu]{pac-uniform-with-eluder}
Wu, Y., He, J., and Gu, Q.
\newblock Uniform-{PAC} guarantees for model-based {RL} with bounded eluder
  dimension.
\newblock In Evans, R.~J. and Shpitser, I. (eds.), \emph{Proceedings of the
  Thirty-Ninth Conference on Uncertainty in Artificial Intelligence}, volume
  216 of \emph{Proceedings of Machine Learning Research}, pp.\  2304--2313.
  PMLR, 31 Jul--04 Aug 2023.

\bibitem[Xie et~al.(2022)Xie, Foster, Bai, Jiang, and Kakade]{xie2022role}
Xie, T., Foster, D.~J., Bai, Y., Jiang, N., and Kakade, S.~M.
\newblock The role of coverage in online reinforcement learning.
\newblock \emph{arXiv preprint arXiv:2210.04157}, 2022.

\bibitem[Xu \& Zeevi(2020)Xu and Zeevi]{xu2020upper}
Xu, Y. and Zeevi, A.
\newblock Upper counterfactual confidence bounds: a new optimism principle for
  contextual bandits.
\newblock \emph{arXiv preprint arXiv:2007.07876}, 2020.

\bibitem[Zhang(2022)]{zhang2022feel}
Zhang, T.
\newblock Feel-good thompson sampling for contextual bandits and reinforcement
  learning.
\newblock \emph{SIAM Journal on Mathematics of Data Science}, 4\penalty0
  (2):\penalty0 834--857, 2022.

\bibitem[Zimin \& Neu(2013)Zimin and Neu]{zimin2013online}
Zimin, A. and Neu, G.
\newblock Online learning in episodic markovian decision processes by relative
  entropy policy search.
\newblock \emph{Advances in neural information processing systems}, 26, 2013.

\end{thebibliography}
\bibliographystyle{icml2024}

\newpage
\appendix
\onecolumn




\section{Metric Eluder Dimension}
\label{sec:eluder-proofs}
Our goal is to prove \cref{lemma:russo-lemma2}, a variant of Proposition 6 in \citet{osband2014model}.
Recall that for any $\Fp' \subseteq \Fp$, its radius at $x \in \X$ is
\begin{align*}
    w_{\Fp'}(x)
    =
    \sup_{P,P' \in \Fp'} D({P}(x), {P'}(x))
    .
\end{align*}
Now, for any $t \in [T], h \in [H]$ let $x_h^t \in \X$ and ${P}_t \in \Fp$ be arbitrary.
    Define the confidence sets with parameter $\beta$ as
\[
    \Fp_t
    =
    \brk[c]*{
    P\in \Fp
    \;:\;
    \sum_{i=0}^{t-1} \sum_{h=0}^{H-1} D^2(P(x_h^i), {P}_t(x_h^i)
    )
    \le
    \beta
    }
    .
\]
\begin{lemma}[Restatement of \cref{lemma:russo-lemma2}]
    Suppose that $\beta \ge H$. We have that
    \begin{align*}
        \sum_{t=1}^{T}\sum_{h=0}^{H-1} \brk{w_{\Fp_t}(x^t_h)}^2
        \le
        6 \dHE(\Fp, D, T^{-1/2}) \beta \log T
        .
    \end{align*}
\end{lemma}

We first need the following result, which is a direct adaptation of Lemma 1 in \cite{osband2014model} (see proof below for completeness).
\begin{lemma}
\label{lemma:russo-prop3}
We have that for any $\epsilon > 0$
\begin{align*}
    \sum_{t=1}^{T}\sum_{h=0}^{H-1}
    \I\brk[s]{w_{\Fp_t}(x_h^t) > \epsilon}
    \le
    \brk*{\frac{4 \beta}{\epsilon^2} + H}\dHE(\Fp, D, \epsilon)
    .
\end{align*}
\end{lemma}
\begin{proof}[Proof of \cref{lemma:russo-lemma2}]
    To reduce notation, write $w_{t,h} = w_{\Fp_t}(x^t_h)$, and $d = \dHE(\Fp, D, T^{-1/2})$.
    Next, reorder the sequence $(w_{1,0}, \ldots, w_{1,H-1}, \ldots, w_{T,0}, \ldots, w_{T,H-1}) \to (w_{i_1}, \ldots, w_{i_{TH}})$ where $w_{i_1} \ge w_{i_2} \ge \ldots \ge w_{i_{TH}}$.
    Then we have that
    \begin{align*}
        \sum_{t=1}^{T} \sum_{h=0}^{H-1} \brk{w_{\Fp_t}(x^t_h)}^2
        =
        \sum_{s=1}^{TH} w_{i_s}^2
        &
        =
        \sum_{s=1}^{TH} w_{i_s}^2 \I\brk[s]{w_{i_s} \le T^{-1/2}}
        +
        \sum_{s=1}^{TH} w_{i_s}^2 \I\brk[s]{w_{i_s} > T^{-1/2}}
        \\
        &
        \le
        \sum_{s=1}^{TH} \frac{1}{T}
        +
        \sum_{s=1}^{TH} w_{i_s}^2 \I\brk[s]{w_{i_s} > T^{-1/2}}
        \\
        &
        =
        H
        +
        \sum_{s=1}^{T} w_{i_s}^2 \I\brk[s]{w_{i_s} > T^{-1/2}}
        .
    \end{align*}
    Now, let $\epsilon \ge T^{-1/2}$ and suppose that $w_{i_s} > \epsilon$. Since $w_{i_s}$ are ordered in descending order, this implies that $\sum_{t=1}^{T}\sum_{h=0}^{H-1} \I\brk{w_{\Fp_t}(x_h^t) > \epsilon} \ge s$. On the other hand, by \cref{lemma:russo-prop3}, we have
    \begin{align*}
        \sum_{t=1}^{T}\sum_{h=0}^{H-1} \I\brk{w_{\Fp_t}(x_h^t) > \epsilon}
        \le
        \brk*{\frac{4 \beta}{\epsilon^2} + H}\dHE(\Fp, D, \epsilon)
        \le
        \brk*{\frac{4 \beta}{\epsilon^2} + H} d
        ,
    \end{align*}
    where the second transition used that $\dHE(\Fp,D, \epsilon')$ is non-increasing in $\epsilon'$. We conclude that 
    $
    s
    \le
    \brk*{{4 \beta}/{\epsilon^2} + H}d
    ,
    $
    and changing sides gives that
    $
    \epsilon^2
    \le
    4 \beta / (s - d H)
    .
    $
    This implies that
    \begin{align*}
        w_{i_s} > T^{-1/2}
        \implies
        w_{i_s}^2
        \le
        \frac{4 \beta d}{s - d H}
        .
    \end{align*}
    We thus have
    \begin{align*}
        \tag{$w_{i_s} \le 1$}
        \sum_{s=1}^{TH} w_{i_s}^2 \I\brk{w_{i_s} > T^{-1/2}}
        &
        \le
        1 + d H
        +
        \sum_{s=d H+2}^{T} w_{i_s}^2 \I\brk{w_{i_s} > T^{-1/2}}
        \\
        &
        \le
        1 + d H
        +
        \sum_{s=d H+2}^{T} \frac{4\beta d}{s-dH}
        \\
        &
        \le
        1 + d H
        +
        4 \beta d \brk*{\int_{t=1}^{T} \frac{1}{t} dt}
        \\
        &
        =
        1 + d H + 4 \beta d \log T
        \\
        \tag{$\beta \ge H$}
        &
        \le
        6 d \beta \log T
        .
    \end{align*}
\end{proof}

\begin{proof}[Proof of \cref{lemma:russo-prop3}]
    The proof follows similarly to Lemma 1 in \citet{osband2014model}.
    We begin by showing that if $w_{\Fp_t}(x^t_h) > \epsilon$ then $x^t_h$ is $(D,\epsilon)-$dependent on fewer than $4 \beta/ \epsilon^2$ disjoint sub-sequences of 
    $
    (x^1_0, \ldots, x^1_{H-1}, \ldots, x^{t-1}_0, \ldots, x^{t-1}_{H-1})
    .
    $
    To see this, note that if $w_{\Fp_t}(x^t_h) > \epsilon$ there are $P, P' \in \Fp_t$ such that
    \begin{align*}
        D(P(x^t_h), P'(x^t_h))
        >
        \epsilon
        .
    \end{align*}
    Now, suppose that $x^t_h$ is $(D,\epsilon)-$dependent on a sub-sequence $(x^{t_1}_{h_1}, \ldots, x^{t_n}_{h_n})$ of 
    $
    (x^1_0, \ldots, x^1_{H-1}, \ldots\linebreak[1], x^{t-1}_0, \ldots, x^{t-1}_{H-1})
    .
    $
    Since 
    $
    D(P(x^t_h), P'(x^t_h))
    \ge
    \epsilon
    ,
    $
    the dependence implies that
    \begin{align*}
        \sum_{j=1}^{n} D^2(P(x^{t_j}_{h_j}), P'(x^{t_j}_{h_j}))
        >
        \epsilon^2
        .
    \end{align*}
    We conclude that, if $x^t_h$ is $\epsilon-$dependent on $K$ disjoint sub-sequences of 
    $
    (x^1_0, \ldots, x^1_{H-1}, \ldots, x^{t-1}_0\linebreak[1], \ldots, x^{t-1}_{H-1})
    $
    then
    \begin{align*}
        \sum_{i=1}^{t-1}\sum_{h=0}^{H-1} D^2(P(x^{i}_h), P'(x^{i}_h))
        >
        K \epsilon^2
        .
    \end{align*}
    On the other hand, since $P,P' \in \Fp_t$, we have
    \begin{align*}
        \sum_{i=1}^{t-1}\sum_{h=0}^{H-1} D^2(P(x^{i}_h), P'(x^{i}_h))
        &
        \tag{triangle inequality for $D$}
        \le
        \sum_{i=1}^{t-1}\sum_{h=0}^{H-1} \brk[s]*{
        D(P(x^{i}_h), {P}_t(x^{i}_h))
        +
        D({P}_t(x^{i}_h), P'(x^{i}_h))
        }^2
        \\
        \tag{$(a+b)^2 \le 2(a^2+b^2)$}
        &
        \le
        2\sum_{i=1}^{t-1}\sum_{h=0}^{H-1} D^2(P(x^{i}_h), {P}_t(x^{i}_h))
        +
        2\sum_{i=1}^{t-1}\sum_{h=0}^{H-1} D^2({P}_t(x^{i}_h), P'(x^{i}_h))
        \\
        &
        \le
        4 \beta
        .
    \end{align*}
    It follows that $K < 4 \beta / \epsilon^2$.

    Next, denote $d := \dHE(\Fp, D, \epsilon)$. We show that in any action sequence $(y_1, \ldots, y_\tau)$, there is some element $y_j$ that is $(D,\epsilon)-$dependent on at least $\tau/d - 1$ disjoint sub-sequences of $(y_1, \ldots, y_{j-1})$. To show this, we will construct $K = \ceil{(\tau/d) - 1}$ disjoint sub-sequences $B_1, \ldots, B_K$. First, let $B_m = (y_m)$ for $m = 1,\ldots, K$. If $y_{K+1}$ is $(D,\epsilon)-$dependent on each sub-sequence $B_1, \ldots, B_K$, the claim is established. 
    Otherwise, append $y_{K+1}$ to a sub-sequence $B_m$ that $y_{K+1}$ is $(D,\epsilon)-$independent of. 
    Repeat this process for elements with indices $i > K+1$ until $y_i$ is $\epsilon-$dependent on all sub-sequences. 
    Suppose in contradiction that the process terminated without finding the desired $y_i$. 
    By the definition of the Eluder dimension, each $B_m$ must satisfy $\abs{B_m} \le d$ and thus $\sum_{m=1}^{K} \abs{B_m} \le K d$. On the other hand
    \begin{align*}
        K d
        =
        \ceil{(\tau / d) - 1} d
        <
        \tau
        =
        \sum_{m=1}^{K} \abs{B_m}
        \le
        K d
        ,
    \end{align*}
    where the last equality follows since all elements of $(y_1, \ldots , y_\tau)$ where placed. This is a contradiction, and thus the process must terminate successfully.
    %
    
    Now, let 
    \[
    \tau
    =
    \sum_{t=1}^{T}\sum_{h=0}^{H-1}
    \I\brk[s]{w_{\Fp_t}(x_h^t) > \epsilon}
    ,
    \]
    and take $(y_1, \ldots, y_\tau)$ to be the sub-sequence 
    $
    (x^{t_1}_{h_1}, \ldots, x^{t_\tau}_{h_\tau})
    $
    of elements $x^t_h$ for which $w_{\Fp_t}(x^t_h) > \epsilon$. Then there exists $x^{t_j}_{h_j}$ that depends on at least $\tau / d - 1$ disjoint sub-sequences
    $
    (x^{t_1}_{h_1}, \ldots, x^{t_\tau}_{h_\tau})
    .
    $
    Notice that at most $H-1$ of these sub-sequences are not sub-sequences of 
    $
    (x^1_0, \ldots, x^1_{H-1}, \ldots, x^{t_j-1}_0, \ldots\linebreak[1], x^{t_j-1}_{H-1})
    .
    $
    We conclude that $x^{t_j}_{h_j}$ depends on at least $(\tau / d) - H$ disjoint sub-sequences of
    $
    (x^1_0, \ldots \linebreak[1], x^1_{H-1}, \ldots, x^{t_j-1}_0, \ldots x^{t_j-1}_{H-1})
    .
    $
    On the other hand, since $w_{\Fp_{t_j}}(x^{t_j}_{h_j}) > \epsilon$ then by the first part of the proof it depends on fewer than $4 \beta / \epsilon^2$ such disjoint sub-sequences, thus
    \[
    \frac{\tau}{d} - H \le \frac{4 \beta}{\epsilon^2}
    ,
    \]
    so $\tau \le (4 \beta / \epsilon^2 + H) d$ as desired.
    %
    %
\end{proof}

%
%
%

\subsection{Upper bound using the $\ell_2$ Eluder dimension}

\begin{proposition}\label{prop:hel}
    Assume that $P,Q$ are two distributions over finite domain $\mathcal{X}$, such that for all $x \in \mathcal{X}$ it holds that $P(x),Q(x) \geq p$. Then,
    \begin{align*}
        \norm{P-Q}^2_2 
        \leq 
        4 D^2_H(P,Q)
        \leq
        \frac{1}{p}\norm{P-Q}^2_2 
    \end{align*}
\end{proposition}

\begin{lemma}\label{lemma:Eluder-upper-bound-p-min}
    Let function class $\Fp$, and assume $P(x) \geq p$ holds for some $p >0$ for every $P \in \Fp$ and $x$ in the domain. 
    Let $\epsilon \in (0,1]$ 
     and assume the the Eluder dimension w.r.t the $\ell_2$ at scale $\sqrt{p}\epsilon$ is $d$. Then, the Eluder dimension w.r.t the Squared Hellinger Distance at scale $\epsilon$ is upper bounded by $d/p$.
\end{lemma}

\begin{proof}
    Let $d$ be the Eluder dimension w.r.t the $\ell_2$-norm at scale $\sqrt{p} \epsilon$ of the class $\Fp$.
    Let $k > \ceil{1/p}$ and $m = d k$. Also let $x_1, x_2, \ldots, x_{m+1}$ be an arbitrary sequence. We show that there must exist $j \in [m+1]$ such that $x_j$ is $\epsilon$ dependent on its prefix in Hellinger distance, thus the Hellinger Eluder dimension at scale $\epsilon$ is at most $m$.
    As in~\cref{lemma:russo-prop3}, there exists $n$ and $\brk[c]{I_j}_{j \in [k]}$ disjoint subsequences such that $\cup_{j \in [k]} I_j = [n-1]$, and $x_n$ is $\sqrt{p} \epsilon$ dependent in $\ell_2$ on each $I_j, j \in [k]$.

    Now, let $P,P' \in \Fp$ such that $D_H^2(P(x_n), P'(x_n)) \ge \epsilon^2$. Then, by~\cref{prop:hel} we have that
    $\norm{P(x_n) - P'(x_n)}_2^2 \ge 4 p \epsilon^2$. Because of the dependence, for all $j \in [k]$ we have that $\sum_{\tau \in I_j} \norm{P(x_\tau) - P'(x_\tau)}_2^2 \ge 4 p \epsilon^2$. We conclude, using~\cref{prop:hel} again, that
    \begin{align*}
        \sum_{\tau = 1}^{n-1} D_H^2(P(x_\tau), P'(x_\tau))
        \ge
        \frac{1}{4}
        \sum_{\tau = 1}^{n-1} \norm{P(x_\tau) - P'(x_\tau)}_2^2
        =
        \frac{1}{4}
        \sum_{j \in [k]} \sum_{\tau \in I_j}
        \norm{P(x_\tau) - P'(x_\tau)}_2^2
        \ge
        \sum_{j \in [k]} p \epsilon^2
        =
        p k \epsilon^2
        >
        \epsilon^2
        ,
    \end{align*}
    thus $x_n$ is $\epsilon$ dependent on its prefix in Hellinger distance. We conclude that no sequence of length $m+1$ can have all elements $\epsilon$ independent of their prefix in Hellinger distance. Thus, the Hellinger Eluder at scale $\epsilon$ is at most $m \approx d / p.$

\end{proof}

\section{Proofs}
\label{sec:proofs}

\subsection{Multiplicative Value Change of Measure}
\label{sec:proof-change-of-measure}

First, we give the following Bernstein type tail bound \citep[see e.g.,][Lemma D.4]{rosenberg2020near}.
\begin{lemma}
\label{lemma:multiplicative-concentration}
Let $\brk[c]{X_t}_{t \ge 1}$
be a sequence of random variables with expectation adapted to a filtration
$\mathcal{F}_t$.
Suppose that $0 \le X_t \le 1$ almost surely. Then with probability at least $1-\delta$
\begin{align*}
    \sum_{t=1}^{T} \E \brk[s]{X_t \mid \mathcal{F}_{t-1}}
    \le
    2 \sum_{t=1}^{T} X_t
    +
    4 \log \frac{2}{\delta}
\end{align*}
\end{lemma}

Recall the Helligner distance given in \cref{def:hellinger}. The following change of measure result is due to \cite{foster2021statistical}.
\begin{lemma}[Lemma A.11 in~\citealp{foster2021statistical}]\label{lemma:A.14-dylan}
Let $\mathbb{P}$ and $\mathbb{Q}$ be two probability measures on $(\mathcal{X}, \mathrm{F})$. For all $h:\mathcal{X} \to \R $ with $0 \leq h(X) \leq R$ almost surely under $\mathbb{P}$ and $\mathbb{Q}$, we have
\begin{align*}
    \left| \E_{\mathbb{P}}[h(X)] -  \E_{\mathbb{Q}}[h(X)] \right| 
    \leq
    \sqrt{2 R ( \E_{\mathbb{P}}[h(X)] +  \E_{\mathbb{Q}}[h(X)])\cdot D^2_H(\mathbb{P}, \mathbb{Q})}
    .
\end{align*}
In particular,
\begin{align*}
    \E_{\mathbb{P}}[h(X)] 
    \leq
    3\E_{\mathbb{Q}}[h(X)] + 4RD^2_H(\mathbb{P}, \mathbb{Q})
    .
\end{align*}
\end{lemma}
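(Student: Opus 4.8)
The plan is to establish the first (sharp) inequality via a factorization-plus-Cauchy--Schwarz argument, and then extract the ``in particular'' consequence by a one-line Young's inequality. I would begin by fixing a common dominating measure $\mu$ (for instance $\mu = \mathbb{P} + \mathbb{Q}$), writing $p,q$ for the densities $d\mathbb{P}/d\mu,\, d\mathbb{Q}/d\mu$, so that $\E_{\mathbb{P}}[h(X)] - \E_{\mathbb{Q}}[h(X)] = \int h\,(p-q)\,d\mu$. The crucial algebraic step is the factorization $p - q = (\sqrt{p} - \sqrt{q})(\sqrt{p} + \sqrt{q})$, which is exactly what surfaces the square-root differences defining the Hellinger distance in \cref{def:hellinger}.

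Next, since $h \ge 0$ I would split $h = \sqrt{h}\cdot\sqrt{h}$ and regroup the integrand as $\big[\sqrt{h}\,(\sqrt{p}-\sqrt{q})\big]\cdot\big[\sqrt{h}\,(\sqrt{p}+\sqrt{q})\big]$, then apply Cauchy--Schwarz to get
\begin{align*}
    \abs{\E_{\mathbb{P}}[h(X)]-\E_{\mathbb{Q}}[h(X)]}
    \le
    \Big(\int h\,(\sqrt{p}-\sqrt{q})^2\,d\mu\Big)^{1/2}
    \Big(\int h\,(\sqrt{p}+\sqrt{q})^2\,d\mu\Big)^{1/2}
    .
\end{align*}
The first factor is bounded using $0 \le h \le R$ almost surely by $R\,D^2_H(\mathbb{P},\mathbb{Q})$, directly from \cref{def:hellinger}. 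For the second factor I would use $(\sqrt{p}+\sqrt{q})^2 \le 2(p+q)$ (which is just $2\sqrt{pq}\le p+q$) to obtain $\int h\,(\sqrt{p}+\sqrt{q})^2\,d\mu \le 2\big(\E_{\mathbb{P}}[h(X)]+\E_{\mathbb{Q}}[h(X)]\big)$. Multiplying the two bounds yields the first displayed inequality of the lemma.

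For the ``in particular'' statement, I would abbreviate $a = \E_{\mathbb{P}}[h(X)]$, $b = \E_{\mathbb{Q}}[h(X)]$, and $D = D^2_H(\mathbb{P},\mathbb{Q})$. The first inequality reads $a - b \le \sqrt{2R(a+b)D}$, and I would apply the weighted AM--GM bound $\sqrt{(a+b)\cdot 2RD} \le \tfrac{1}{2}(a+b) + RD$, rearrange to $a \le 3b + 2RD$, and finish by loosening $2RD$ to $4RD$ to match the stated form.

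The argument is essentially routine once the factorization is chosen; the only points requiring care are the non-negativity $h \ge 0$ (so that $\sqrt{h}$ is real and the split across the two Cauchy--Schwarz factors is legitimate) and the use of a common dominating measure so that all the density manipulations are well-defined (in the discrete setting of \cref{def:hellinger} one may simply take sums over the support). I do not anticipate a genuine obstacle here — the main subtlety is organizing the split of $h$ so that the squared-Hellinger term emerges cleanly from the ``difference'' factor while the ``sum'' factor reduces to $\E_{\mathbb{P}}[h(X)]+\E_{\mathbb{Q}}[h(X)]$.
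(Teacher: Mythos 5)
Your proof is correct. The paper does not actually prove this lemma --- it imports it verbatim from \cite{foster2021statistical} --- and your argument (factor $p-q=(\sqrt{p}-\sqrt{q})(\sqrt{p}+\sqrt{q})$, split $h=\sqrt{h}\cdot\sqrt{h}$, apply Cauchy--Schwarz, bound one factor by $R\,D^2_H$ and the other by $2(\E_{\mathbb{P}}[h]+\E_{\mathbb{Q}}[h])$ via $(\sqrt{p}+\sqrt{q})^2\le 2(p+q)$) is precisely the standard derivation from that source; your AM--GM step even yields the slightly sharper constant $2R\,D^2_H$ before loosening to the stated $4R\,D^2_H$.
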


Next, we need the following refinement of the previous result.
\begin{corollary}\label{corl:A.11}
    For any $\beta \geq 1$,
    \begin{align*}
    \E_{\mathbb{P}}[h(X)] 
    \leq
    (1+1/\beta)\E_{\mathbb{Q}}[h(X)] + 3\beta RD^2_H(\mathbb{P}, \mathbb{Q})
    .
\end{align*}
\end{corollary}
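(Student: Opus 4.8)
The plan is to start from the square-root bound in \cref{lemma:A.14-dylan} and convert it into the desired additive form by a weighted AM--GM inequality with a carefully tuned parameter. Write $a := \E_{\mathbb{P}}[h(X)]$, $b := \E_{\mathbb{Q}}[h(X)]$, and $D := D^2_H(\mathbb{P},\mathbb{Q})$; since $0 \le h(X) \le R$ almost surely we have $a, b \ge 0$. The first inequality of \cref{lemma:A.14-dylan} gives in particular the one-sided bound $a - b \le \sqrt{2R(a+b)D}$, which is all we need.

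Next I would apply the elementary inequality $\sqrt{xy} \le \tfrac{\epsilon}{2}x + \tfrac{1}{2\epsilon}y$, valid for every $\epsilon > 0$ (it is just $\sqrt{xy} = \sqrt{(\epsilon x)(y/\epsilon)} \le \tfrac12(\epsilon x + y/\epsilon)$), with the choice $x = a+b$ and $y = 2RD$. This yields $a - b \le \tfrac{\epsilon}{2}(a+b) + \tfrac{1}{\epsilon}RD$, which rearranges to $a\brk*{1 - \tfrac{\epsilon}{2}} \le b\brk*{1 + \tfrac{\epsilon}{2}} + \tfrac{1}{\epsilon}RD$.

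The crux is the choice of $\epsilon$. Taking $\epsilon = \tfrac{2}{2\beta+1}$, which lies in $(0,1]$ for all $\beta \ge 1$ so that $1 - \epsilon/2 > 0$ and division is legitimate, makes the coefficient of $b$ equal exactly $\tfrac{1+\epsilon/2}{1-\epsilon/2} = \tfrac{2(\beta+1)/(2\beta+1)}{2\beta/(2\beta+1)} = 1 + 1/\beta$, while the coefficient of $RD$ becomes $\tfrac{1}{\epsilon(1-\epsilon/2)} = \tfrac{(2\beta+1)^2}{4\beta}$. It then remains only to verify that $\tfrac{(2\beta+1)^2}{4\beta} \le 3\beta$ for every $\beta \ge 1$; this is equivalent to $8\beta^2 - 4\beta - 1 \ge 0$, which holds because the left-hand side equals $3$ at $\beta = 1$ and has positive derivative $16\beta - 4$ on $[1,\infty)$. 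Combining the two bounds gives $a \le (1+1/\beta)\,b + 3\beta RD$, as claimed.

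The only genuine subtlety is identifying the tuning $\epsilon = 2/(2\beta+1)$ that simultaneously drives the coefficient of $\E_{\mathbb{Q}}[h(X)]$ down to the target $1 + 1/\beta$ and keeps the Hellinger coefficient below $3\beta$; everything after that is routine algebra and a one-line monotonicity check. I note that the sharpened leading factor $1+1/\beta$, as opposed to the factor $3$ in \cref{lemma:A.14-dylan}, is precisely what will later allow the cumulative change-of-measure error to be absorbed into the $\betaR$ and $\betaP$ normalizations rather than inflating the value estimates.
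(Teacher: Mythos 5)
Your proof is correct and follows essentially the same route as the paper's: both derive $\E_{\mathbb{P}}[h] - \E_{\mathbb{Q}}[h] \le \eta(\E_{\mathbb{P}}[h]+\E_{\mathbb{Q}}[h]) + \frac{R}{2\eta}D_H^2$ via AM--GM (your $\epsilon/2$ is the paper's $\eta$), rearrange, and plug in $\eta = 1/(2\beta+1)$ to hit the coefficient $1+1/\beta$ exactly. In fact your intermediate Hellinger coefficient $\frac{(2\beta+1)^2}{4\beta}$ is the correct value of $\frac{R}{2\eta(1-\eta)}$ (the paper's displayed intermediate $3R\frac{(2\beta+1)^2}{2\beta}$ contains a typo), and your final check $8\beta^2-4\beta-1\ge 0$ is exactly what is needed to conclude.
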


\begin{proof}
    Let $\eta \in (0,1)$. Consider the following derivation.
    \begingroup
    \allowdisplaybreaks
    \begin{align*}
        \E_{\mathbb{P}}[h(X)] - \E_{\mathbb{Q}}[h(X)]
        &
        \leq
        \sqrt{2 R ( \E_{\mathbb{P}}[h(X)] +  \E_{\mathbb{Q}}[h(X)])\cdot D^2_H(\mathbb{P}, \mathbb{Q})}
        \\
        &
        \leq
        \eta (\E_{\mathbb{P}}[h(X)] + \E_{\mathbb{Q}}[h(X)]) + \frac{R}{2\eta}D^2_H(\mathbb{P}, \mathbb{Q}).
    \end{align*}
    \endgroup
    The above implies
    \begingroup\allowdisplaybreaks
    \begin{align*}
        \E_{\mathbb{P}}[h(X)] 
        &
        \leq 
        \frac{1+\eta}{1-\eta}\E_{\mathbb{Q}}[h(X)] + \frac{R}{2\eta (1-\eta)}D^2_H(\mathbb{P},\mathbb{Q})
        \\
        \tag{Plug $\eta = \frac{1}{2\beta+1}$ for all $\beta \in (0, \infty)$.}
        &
        =
        \left(1 + \frac{1}{\beta}\right)\E_{\mathbb{Q}}[h(X)] + 3R \frac{(2\beta +1)^2}{2\beta} D^2_H(\mathbb{P},\mathbb{Q})
        \\
        \tag{For any $\beta \geq 1$}
        &
        \leq
        \left(1 + \frac{1}{\beta}\right)\E_{\mathbb{Q}}[h(X)] + 3R \beta D^2_H(\mathbb{P},\mathbb{Q}).
    \end{align*}
    \endgroup
\end{proof}

\begin{lemma}[restatement of \cref{lemma:value-change-of-measure-main}]
    Let $r: S \times A \to [0,1]$ be a bounded expected rewards function. Let $P_\star$ and $\widehat{P}$ denote two dynamics and consider the MDPs $M  = (S,A,P_\star,r, s_0,H)$ and $\widehat{M}  = (S,A,\widehat{P},r, s_0,H)$.
    Then, for any policy $\pi$ we have
    \begin{align*}
        V^\pi_{\widehat{M}}(s)
        \le
        3 V^\pi_{{M}}(s)
        +
        9 H^2
        \mathop{\E}_{P_\star, \pi}
        \brk[s]*{
        \sum_{h=0}^{H-1}
        D_H^2(\widehat{P}(\cdot|s_{h},a_{h}), {P}_\star(\cdot|s_{h},a_{h}))
        \Bigg| s_{0} = s
        }
        .
    \end{align*}
\end{lemma}
\begin{proof}
    We first prove by backwards induction that for all $h \in [H-1]$ the following holds.
    \begin{align*}
        V^\pi_{\widehat{M},h}(s)
        \le
        \brk*{1+\frac1H}^{H-h}
        \brk[s]*{
        V^\pi_{{M},h}(s)
        +
        \mathop{\E}_{P_\star, \pi}
        \brk[s]*{
        \sum_{h'=h}^{H-1}
        3H^2 D_H^2(\widehat{P}(\cdot|s_{h'},a_{h'}), {P}_\star(\cdot|s_{h'},a_{h'}))
        \Bigg| s_{h} = s
        }
        }
        .
    \end{align*}
    The base case, $h=H-1$ is immediate since $V^\pi_{\widehat{M},h}(s) = V^\pi_{{M},h}(s)$.
    Now, we assume that the above holds for $h+1$ and prove that it holds for $h$.
    To see this, we have that
    \begingroup\allowdisplaybreaks
    \begin{align*}
        &
        V^\pi_{\widehat{M},h}(s)
        \tag{By Bellman's equations}
        = 
        \mathop{\E}_{a \sim \pi(\cdot|s)}
        \brk[s]*{
        r(s,a)
        +
        \E_{s' \sim \widehat{P}(\cdot|s,a)}
        \brk[s]*{V^\pi_{\widehat{M},h+1}(s') 
        } }
        \\
        \tag{\cref{corl:A.11}}
        \le &
        \mathop{\E}_{a \sim \pi(\cdot|s)}
        \brk[s]*{
        r(s,a)
        +
        \brk*{1+\frac1H}\E_{s' \sim {P}_\star(\cdot|s,a)}
        \brk[s]*{V^\pi_{\widehat{M},h+1}(s')
        }
        +
        3H^2 D_H^2(\widehat{P}(\cdot|s,a), {P}_\star(\cdot|s,a))
        } 
        \\
        \tag{Induction hypothesis}
        \le &
        \mathop{\E}_{a \sim \pi(\cdot|s)}
        \brk[s]*{
        r(s,a)
        +
        3H^2 D_H^2(\widehat{P}(\cdot|s,a), {P}_\star(\cdot|s,a))
        }
        \\
        + &
        \mathop{\E}_{a \sim \pi(\cdot|s)}
        \brk[s]*{
        \brk*{1+\frac1H}^{H-h}\mathop{\E}_{s' \sim {P}_\star(\cdot|s,a)}
        \brk[s]*{
        V^\pi_{{M},h+1}(s')
        }
        }
        \\
        + &
        \mathop{\E}_{a \sim \pi(\cdot|s)}
        \brk[s]*{
        \brk*{1+\frac1H}^{H-h}\mathop{\E}_{s' \sim {P}_\star(\cdot|s,a)}
        \brk[s]*{
        \E\brk[s]*{
        \sum_{h'=h+1}^{H-1}
        3H^2 D_H^2(\widehat{P}(\cdot|s_{h'},a_{h'}), {P}_\star(\cdot|s_{h'},a_{h'}))
        \Bigg| s_{h+1} = s'
        }
        }
        }
        \\
        \tag{$r, D_H^2 \ge 0$}
        \le &
        \brk*{1+\frac1H}^{H-h}
        \mathop{\E}_{a \sim \pi(\cdot|s)}
        \brk[s]*{
        r(s,a)
        +
        \mathop{\E}_{s' \sim {P}_\star(\cdot|s,a)}
        \brk[s]*{
        V^\pi_{{M},h+1}(s')
        }
        }
        \\
        + &
        \brk*{1+\frac1H}^{H-h}
        \mathop{\E}_{P_\star, \pi}
        \brk[s]*{
        \sum_{h'=h}^{H-1}
        3H^2 D_H^2(\widehat{P}(\cdot|s_{h'},a_{h'}), {P}_\star(\cdot|s_{h'},a_{h'}))
        \Bigg| s_{h} = s
        }
        \\
        \tag{By Bellman's equations}
        = &
        \brk*{1+\frac1H}^{H-h}
        \brk[s]*{
        V^\pi_{{M},h}(s)
        +
        \mathop{\E}_{P_\star, \pi}
        \brk[s]*{
        \sum_{h'=h}^{H-1}
        3H^2 D_H^2(\widehat{P}(\cdot|s_{h'},a_{h'}), {P}_\star(\cdot|s_{h'},a_{h'}))
        \Bigg| s_{h} = s
        }
        }
        ,
    \end{align*}
    \endgroup
    as desired. Plugging in $h=0$ and using that $\brk*{1+\frac1H}^{H} \le 3$ concludes the proof.
\end{proof}


\subsection{Oracle Bounds (Step 1)}
\label{sec:proofs-oracles}

\paragraph{Reward oracle.}
\begin{lemma}[Lemma B.10 in \citealp{levy2022optimism}]\label{lemma:uniform-convergence}
    For any $\delta \in (0,1)$, with probability at least $1-\delta$ we have
    \begin{align*}
        &\sum_{i=1}^{t-1}\E \brk[s]4{ \sum_{h=0}^{H-1} (f_t(c_i,s^i_h,a^i_h) - f_\star(c_i,s^i_h,a^i_h))^2 ~\bigg|~ \Hist_{i-1} }
        \\
        & =
        \sum_{i=1}^{t-1}\sum_{h=0}^{H-1} \E_{c_i,s^i_h,a^i_h} \brk[1]{ (f_t(c_i,s^i_h,a^i_h)-f_\star(c_i,s^i_h,a^i_h))^2 \mid \Hist_{i-1}}
        \\
        & \leq
        68H\log(2|\F|t^3/\delta) + 2\sum_{i=1}^{t-1}\sum_{h=0}^{H-1} (f_t(c_i,s^i_h,a^i_h) - r^i_h)^2  - (f_\star(c_i,s^i_h,a^i_h) - r^i_h)^2
    \end{align*}
    simultaneously, for all $t\geq 2$ and any fixed sequence of functions $f_1,f_2,\ldots \in \F$.
\end{lemma}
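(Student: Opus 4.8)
The plan is to establish this as a one-sided Bernstein (Freedman-type) generalization bound for offline least-squares over the finite, realizable class $\F$, with the martingale taken at the level of whole episodes so that the horizon factor emerges naturally.

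First, I would fix an arbitrary $f\in\F$. For each episode $i$ and step $h$ set
\[
    D_{i,h}(f) := \brk*{f(c_i,s^i_h,a^i_h) - r^i_h}^2 - \brk*{f_\star(c_i,s^i_h,a^i_h) - r^i_h}^2 ,
\]
and $D_i(f):=\sum_{h=0}^{H-1}D_{i,h}(f)$. Conditioning on the tuple $(c_i,s^i_h,a^i_h)$ and using the realizability assumption $f_\star(c,s,a)=\E[R^c_\star(s,a)\mid c,s,a]$ (so $f_\star$ is the conditional mean of the reward), the identity $(f-r)^2-(f_\star-r)^2=(f-f_\star)(f+f_\star-2r)$ gives the step-conditional mean $\brk*{f(c_i,s^i_h,a^i_h)-f_\star(c_i,s^i_h,a^i_h)}^2$. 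Taking the tower expectation down to $\Hist_{i-1}$ yields $G_i(f):=\E[D_i(f)\mid\Hist_{i-1}]=\E\brk[s]*{\sum_h (f-f_\star)^2 \mid \Hist_{i-1}}$, which is exactly the per-episode quantity on the left-hand side. So the goal reduces to showing $\sum_i G_i(f)\le 2\sum_i D_i(f)+O(H\log(\cdot))$.

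Second, I would set up episode-level concentration. The differences $G_i(f)-D_i(f)$ form a martingale-difference sequence w.r.t.\ $\{\Hist_{i-1}\}$. Since all function values and rewards lie in $[0,1]$ we have $|D_{i,h}|\le 1$, hence $|D_i|\le H$; moreover the step-conditional second moment is controlled by the mean, $\E[D_{i,h}^2\mid\cdot]\le 2(f-f_\star)^2$, because $\Var[r\mid\cdot]\le\tfrac14$. A Cauchy–Schwarz step then bounds the episode variance, $\Var[D_i\mid\Hist_{i-1}]\le H\sum_h\E[D_{i,h}^2\mid\Hist_{i-1}]\le 2H\,G_i$. Feeding the range $H$ and this variance proxy into Freedman's inequality (equivalently, the exponential supermartingale $\exp(\lambda\sum_i(G_i-D_i)-c\lambda^2\sum_i\Var_i)$ with $\lambda\asymp 1/H$) lets the variance term absorb exactly half of $\sum_i G_i$, leaving an additive slack of order $H\log(1/\delta')$; rearranging gives $\sum_i G_i\le 2\sum_i D_i+O(H\log(1/\delta'))$ for the fixed $f$ with probability $1-\delta'$.

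Finally, I would union bound. Because $\F$ is finite and $f_\star\in\F$, taking $\delta'=\delta/(\poly(t)\cdot|\F|)$ and a union over $f\in\F$ makes the inequality hold for every member of the class simultaneously — in particular for the (history-dependent) oracle output, which is what licenses the phrasing ``any fixed sequence $f_2,f_3,\dots\in\F$.'' A further union over rounds $t$, allocating failure probability proportional to $1/t^2$, produces the ``simultaneously for all $t$'' claim and is the origin of the $t^3$ inside the logarithm. The main obstacle, and where the constant $68$ and the factor $H$ are decided, is the second step: one must verify the supermartingale property in the adaptive, layered setting (states and actions are policy- and history-dependent, yet the rewards are conditionally mean-$f_\star$), and the interplay between the episode range $H$, the Cauchy–Schwarz variance bound, and the choice $\lambda\asymp 1/H$ is exactly what converts the log-term slack into $O(H\log(\cdot))$ while preserving the sharp multiplicative constant $2$.
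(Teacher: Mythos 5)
The paper does not prove this lemma itself; it is imported verbatim as Lemma C.10 of \cite{levy2022optimism}, so there is no in-paper proof to compare against. Your reconstruction is the standard argument for exactly this kind of statement and is structurally sound: the identity $\E[D_{i,h}\mid c_i,s^i_h,a^i_h]=(f-f_\star)^2$ via realizability, an episode-level martingale in $i$ (which is the right granularity here, since the left-hand side conditions only on $\Hist_{i-1}$ and the range $H$ of $D_i$ is what produces the $H$ in $68H\log(\cdot)$), a Freedman/self-bounding step that absorbs half of $\sum_i G_i$ into the variance term, and union bounds over $\F$ and over $t$ to cover the data-dependent $f_t$ and the ``simultaneously for all $t$'' clause. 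The one genuine slip is quantitative: the step-conditional second moment satisfies $\E[D_{i,h}^2\mid c_i,s^i_h,a^i_h]=(f-f_\star)^2\,\E[(f+f_\star-2r)^2\mid\cdot]\le 4(f-f_\star)^2$, not $2(f-f_\star)^2$; the justification via $\Var[r\mid\cdot]\le\tfrac14$ does not give the claimed factor. This only changes the choice of $\lambda$ and the additive constant, not the multiplicative factor $2$ or the overall form of the bound.
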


\begin{corollary}[restatement of \cref{corl:reward-distance-bound-w.h.p-main}]
    Let $\widehat{f}_t \in \F$ be the least squares minimizer in \cref{alg:UCCRL-main}.
    For any $\delta \in (0,1)$ it holds that with probability at least $1-\delta$ we have
    \begin{align*}
        &{\E}_c \brk[s]*{
        \sum_{i=1}^{t-1}
        \mathop{\E}_{\pi_i(c;\cdot), P^{c}_\star} \Bigg[
        \sum_{h=0}^{H-1}
        \brk*{
        \widehat{f}_t(c, s_h, a_h) - f_\star(c, s_h, a_h)
        }^2
        \Bigg| s_0\Bigg]}
        \leq
        68H \log(2 T^3 |\F|/\delta)
        ,
    \end{align*}
    simultaneously, for all $t \geq 1$. 
\end{corollary}

\begin{proof}
    Recall that for all $t\geq 2$, $\widehat{f}_t$ is the least square minimizer at round $t$.
    Hence, by our assumption that $f_\star \in \F$
    \begin{align*}
        \sum_{i=1}^{t-1} \sum_{h=0}^{H-1}(\widehat{f}_t(c_i,s^i_h,a^i_h)-r^i_h)^2 - (f_\star(c_i,s^i_h,a^i_h)-r^i_h)^2 \leq 0.
    \end{align*}
    Thus the corollary immediately follows by \cref{lemma:uniform-convergence}.
\end{proof}

\paragraph{Dynamics oracle.}

Recall the Hellinger distance given in \cref{def:hellinger}.
The following lemma by \cite{foster2021statistical} upper bounds the expected cumulative Hellinger Distance in terms of the log-loss.
Let $\X$ be a set and $\Y$ be a finite set. Let $x^{(t)}, y^{(t)}$, $t \ge 1$, be a sequence of random variables that satisfy $y^{(t)} \sim g_\star(\cdot | x^{(t)})$, where $g_\star : \Y \times \X \to \R_+$ maps $x \in \X$ to the density of $ y \in \Y$. Define $\mathcal{H}^{(t)} = (x^{(1)}, y^{(1)}, \ldots, x^{(t)}, y^{(t}))$ and let $\mathcal{G}^{(t)} = \sigma(\mathcal{H}^{(t)})$. Next, for a random variable $Z$, we define $\E_t Z := \E [Z | \mathcal{G}_t]$.
\begin{lemma}[Lemma A.14 from~\citealp{foster2021statistical}]\label{lemma:A.14-dylan-LL}
    Let $g: \Y \times \X \to \R_+$ be a mapping from $\X$ to densities over $\Y$.
    Consider a sequence of $\{0,1\}$-valued random variables $(\I_t)_{t \leq T}$ where $\I_t$ is $\mathit{F}^{(t-1)}$-measurable. For any $\delta \in (0,1)$ we have that with probability at least $1-\delta$,
    \begin{align*}
        \sum_{t=1}^T \E_{t-1}
        &
        \left[ D^2_H(g(\cdot | x^{(t)}), {g}_\star(\cdot | x^{(t)})) \right]\I_t
        \\
        &
        \leq
         \sum_{t=1}^T \left( \log\frac{1}{g(y^{(t)} | x^{(t)})} - \log\frac{1}{g_\star(y^{(t))} | x^{(t)}} \right)\I_t + 2 \log(1/\delta).
    \end{align*}
    Additionally, with probability at least $1-\delta$,
    \begin{align*}
        \sum_{t=1}^T  D^2_H(g(\cdot | x^{(t)}), {g}_\star(\cdot | x^{(t)}))
        \leq
         \sum_{t=1}^T \left( \log\frac{1}{g(y^{(t)} | x^{(t)})} - \log\frac{1}{g_\star(y^{(t))} | x^{(t)}} \right)
         +
         2 \log(1/\delta).
    \end{align*}
\end{lemma}

Using the above lemma, we bound the realized and expected cumulative Hellinger distance between the approximated and true dynamics, by the actual regret of the log-loss regression oracle (and constant terms), with high probability.
\begin{lemma}[Concentration of log-loss oracle]\label{lemma:transition-to-oracle-rgret-dynamics}
    For any $\delta \in (0,1)$ it holds that with probability at least $1-\delta$ we have
    \begin{align*}
        &{\E}_c \brk[s]*{
        \sum_{i=1}^{t-1}
        \mathop{\E}_{\pi_i(c;\cdot), P^{c}_\star} \Bigg[
        \sum_{h=0}^{H-1}D^2_H(P^{c}_\star(\cdot|s_{h},a_{h}),P^{c}(\cdot|s_{h},a_{h}))
        \Bigg| s_0\Bigg]}
        \\
        \leq & 
        \sum_{i=1}^{t-1} \sum_{h=0}^{H-1} \log\left( \frac{1}{P^{c_i}(s^i_{h+1}|s^i_h,a^i_h)}\right) 
        -
        \sum_{i=1}^{t-1} \sum_{h=0}^{H-1} \log\left( \frac{1}{P_\star^{c_i}(s^i_{h+1}|s^i_h,a^i_h)}\right) 
        + 2H \log(TH |\Fp|/\delta).
    \end{align*}
    simultaneously, for all $t \geq 1$ and $P \in \Fp$. 
\end{lemma}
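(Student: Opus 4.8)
The goal is to bound the context-averaged cumulative expected squared Hellinger distance between the true and approximate dynamics by the empirical log-loss regret of the LLR oracle plus a logarithmic term. The plan is to apply Lemma~A.14 (\cref{lemma:A.14-dylan-LL}) from \cite{foster2021statistical}, which already gives exactly this kind of Hellinger-to-log-loss transition, but stated for a single sequence of predictions with indicator variables $\I_t$. The main work is purely organizational: I must identify the ``time index'' of the abstract lemma with the pair $(i,h)$ ranging over past episodes $i \in [t-1]$ and horizon steps $h \in [H-1]$, so that each conditional-expectation term in the lemma matches a one-step Hellinger distance $D_H^2(P^{c_i}_\star(\cdot|s^i_h,a^i_h), P^{c_i}(\cdot|s^i_h,a^i_h))$ against the realized next state.

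**Key steps in order.** First I would fix the identifications for \cref{lemma:A.14-dylan-LL}: the random variable $x^{(t)}$ becomes the state-action-context triple $(c_i,s^i_h,a^i_h)$ at step $(i,h)$; the ground-truth conditional law ${g}^{(t)}_\star$ is $P^{c_i}_\star(\cdot|s^i_h,a^i_h)$; the prediction $\widehat{g}^{(t)}$ is $P^{c_i}(\cdot|s^i_h,a^i_h)$; and the per-step log-loss $\log^{(t)}(g)$ is $\log(1/g(s^i_{h+1}))$, the negative log-likelihood of the realized next state. The filtration $\mathit{F}^{(t-1)}$ is taken to be the history up to and including step $(i,h)$, which makes the prediction measurable since $P$ is a fixed function and $(c_i,s^i_h,a^i_h)$ is observed. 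Second, I would set all indicators $\I_t = 1$ (we sum over every past step). Third, the conditional expectation $\E_{t-1}[\,\cdot\,]$ in the lemma then equals the expectation over the next state $s^i_{h+1} \sim P^{c_i}_\star(\cdot|s^i_h,a^i_h)$, which is precisely the inner one-step expectation appearing in the stated bound; summing over $h$ and $i$ and taking the outer expectation over the context distribution via the i.i.d.\ sampling of $c_i \sim \D$ produces the left-hand side. Fourth, since the total number of summands is at most $(t-1)H \le TH$, I would apply the lemma with a union bound (or rescale $\delta$) across all $t \le T$ and all $P \in \Fp$; here $|\Fp|$ and the factor $TH$ enter the $\log(TH|\Fp|/\delta)$ term, and the factor of $2H$ arises because the lemma contributes $2\log(1/\delta)$ per application and we rescale $\delta \to \delta/(TH|\Fp|)$.

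**Main obstacle.** The conceptually delicate step is the \emph{uniformity over $P \in \Fp$ and over $t$}. \cref{lemma:A.14-dylan-LL} as stated applies to a single fixed predictor sequence, whereas the conclusion must hold simultaneously for every $P \in \Fp$ (so that it in particular applies to the data-dependent minimizer $\widehat{P}_t$) and for all $t$. The resolution is to apply the lemma once per $P \in \Fp$ with a rescaled confidence parameter and union-bound over the finite class $\Fp$ and over the $T$ possible values of $t$; because the bound holds for every fixed $P$ simultaneously, it holds in particular for the adaptively chosen $\widehat{P}_t$, exactly as in the analogous least-squares argument of \cref{lemma:uniform-convergence}. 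A secondary subtlety is verifying the martingale/measurability structure: one must confirm that $D_H^2(P^{c_i}_\star, P^{c_i})$ at step $(i,h)$ is measurable with respect to the history before the next-state draw, which holds because the policy $\pi_i$, the dynamics $\widehat{P}_i$, and the visited $(c_i,s^i_h,a^i_h)$ are all determined by that history. Once these are settled, the remaining manipulation is routine bookkeeping.
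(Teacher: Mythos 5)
There is a genuine gap, and it lies in your choice of filtration. You take $\mathit{F}^{(t-1)}$ at step $(i,h)$ to be ``the history up to and including step $(i,h)$'', i.e., including $(c_i,s^i_h,a^i_h)$. Under that filtration the quantity $D^2_H\bigl(P^{c_i}_\star(\cdot|s^i_h,a^i_h),P^{c_i}(\cdot|s^i_h,a^i_h)\bigr)$ is already measurable (it is a deterministic function of $(c_i,s^i_h,a^i_h)$ and does not involve $s^i_{h+1}$ at all), so $\E_{t-1}[\,\cdot\,]$ in \cref{lemma:A.14-dylan-LL} reduces to the realized value itself, and the lemma then bounds the \emph{realized} sum $\sum_{i,h} D^2_H(P^{c_i}_\star(\cdot|s^i_h,a^i_h),P^{c_i}(\cdot|s^i_h,a^i_h))$. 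That is not the left-hand side of the statement: the target involves $\E_c\bigl[\E_{\pi_i(c;\cdot),P^c_\star}[\cdots\mid s_0]\bigr]$, an expectation over a \emph{fresh} context $c\sim\D$ and a fresh rollout of $\pi_i(c;\cdot)$ under $P^c_\star$ from $s_0$ — a full counterfactual occupancy average, not a one-step expectation over $s^i_{h+1}$. Your closing step, ``taking the outer expectation over the context distribution \ldots produces the left-hand side,'' does not repair this: one cannot push an expectation through a high-probability bound on a realized quantity, and the realized and expected sums differ precisely by a martingale deviation that is the whole point of invoking \cref{lemma:A.14-dylan-LL}.

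The paper's proof fixes this by choosing the filtration to be $\Hist_{i-1}$ only (the trajectories of episodes $1,\dots,i-1$, excluding all of episode $i$), and by applying \cref{lemma:A.14-dylan-LL} separately for each layer $h$ with the episode index $i$ playing the role of time. Then $\E_{i-1}[\,\cdot\,]$ integrates over the draw of $c_i\sim\D$ and the entire trajectory prefix up to $(s^i_h,a^i_h)$, which by the i.i.d.\ contexts and the Markov property equals exactly $\E_c\bigl[\E_{\pi_i(c;\cdot),P^c_\star}[D^2_H(\cdots)\mid s_0]\bigr]$ — this is the identity marked $(i)$ in the paper. This per-layer application is also where the factor $2H$ genuinely comes from ($H$ applications, each contributing $2\log(TH|\Fp|/\delta)$); your attribution of the $H$ to the $\delta$-rescaling is incorrect, since rescaling $\delta$ only changes the argument of the logarithm. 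The remainder of your argument — the union bound over $t\le T$ and over all $P\in\Fp$ so that the bound applies to the adaptively chosen minimizer, and the identification of the log-loss terms — is correct and matches the paper.
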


\begin{proof}
    Fix some $t \ge 1$ and $P \in \Fp$. We have with probability at least $1-\frac{\delta}{T|\Fp|}$ that
    \begingroup
    \allowdisplaybreaks
    \begin{align*}
        &
        {\E}_c \brk[s]*{
        \sum_{i=1}^{t-1}
        \mathop{\E}_{\pi_i(c;\cdot), P^{c}_\star} \Bigg[
        \sum_{h=0}^{H-1}D^2_H(P^{c}_\star(\cdot|s_{h},a_{h}),P^{c}(\cdot|s_{h},a_{h}))
        \Bigg| s_0\Bigg]}
        \\
        &
        =
        \sum_{i=1}^{t-1}
        \sum_{h=0}^{H-1}
        {\E}_c\left[\mathop{\E}_{\pi_i(c;\cdot), P^{c}_\star} \Bigg[
        D^2_H(P^{c}_\star(\cdot|s_{h},a_{h}),P^{c}(\cdot|s_{h},a_{h}))
        \Bigg| s_0\Bigg]\right]
        \\
        &
        \underbrace{=}_{(i)}
        \sum_{h=0}^{H-1}
        \sum_{i=1}^{t-1}
        \mathop{\E} \Bigg[
        D^2_H(P^{c_i}_\star(\cdot|s^i_{h},a^i_{h}),P^{c_i}(\cdot|s^i_{h},a^i_{h}))
        \Bigg| s_0, \pi_i \Bigg]
        \\
        \tag{\citealp[Lemma A.14]{foster2021statistical}}
        & \leq
        \sum_{h=0}^{H-1}\left(\sum_{i=1}^{t-1}\log\left(\frac{P^{c_i}_\star(s^i_{h+1}|s^i_h,a^i_h)}{{P}^{c_i}(s^i_{h+1}|s^i_h,a^i_h)}\right) + 2 \log(HT |\Fp|/\delta)\right)
        \\
        & =
        \sum_{i=1}^{t-1} \sum_{h=0}^{H-1} \log\left(\frac{1}{{P}^{c_i}(s^i_{h+1}|s^i_h,a^i_h)} \right) -  \sum_{i=1}^{t-1} \sum_{h=0}^{H-1}\log\left(\frac{1}{P^{c_i}_\star(s^i_{h+1}|s^i_h,a^i_h)}\right) + 2 H\log(HT |\Fp|/\delta)
    \end{align*}
    \endgroup
    The filtration used in $(i)$ is over the history up to time $t$, $\Hist_{t-1} = (\sigma^1, \ldots,\sigma^{t-1})$.
    Now, by taking a union bound over every $t = 1.\ldots, T$ and $P \in \Fp$, we obtain the lemma.
\end{proof}

\begin{lemma}[Realized log-loss error]\label{lemma:realized-log-loss-regret}
    For any $\delta \in (0,1)$ it holds that with probability at least $1-\delta$ we have
    \begin{align*}
        &
        \sum_{i=1}^{t-1}
        \sum_{h=0}^{H-1}D^2_H(P^{c_i}_\star(\cdot|s^i_{h},a^i_{h}),P^{c_i}(\cdot|s^i_{h},a^i_{h}))
        \\
        \leq & 
        \sum_{i=1}^{t-1} \sum_{h=0}^{H-1} \log\left( \frac{1}{P^{c_i}(s^i_{h+1}|s^i_h,a^i_h)}\right) 
        -
        \sum_{i=1}^{t-1} \sum_{h=0}^{H-1} \log\left( \frac{1}{P_\star^{c_i}(s^i_{h+1}|s^i_h,a^i_h)}\right) 
        + 2H \log(TH |\Fp|/\delta).
    \end{align*}
    simultaneously, for all $t \geq 1$ and $P \in \Fp$. 
\end{lemma}

\begin{proof}
    Fix some $t \ge 1$ and $P \in \Fp$. We have with probability at least $1-\frac{\delta}{T|\Fp|}$ that
    \begingroup
    \allowdisplaybreaks
    \begin{align*}
        &
        \sum_{i=1}^{t-1}
        \sum_{h=0}^{H-1}D^2_H(P^{c_i}_\star(\cdot|s^{i}_{h},a^{i}_{h}),P^{c_i}(\cdot|s^{i}_{h},a^{i}_{h}))
        \\
        &
        =
        \sum_{h=0}^{H-1}
        \sum_{i=1}^{t-1}
        D^2_H(P^{c_i}_\star(\cdot|s^{i}_{h},a^{i}_{h}),P^{c_i}(\cdot|s^{i}_{h},a^{i}_{h}))
        \\
        \tag{\citealp[Lemma A.14]{foster2021statistical}}
        & \leq
        \sum_{h=0}^{H-1}\left(\sum_{i=1}^{t-1}\log\left(\frac{P^{c_i}_\star(s^i_{h+1}|s^i_h,a^i_h)}{{P}^{c_i}(s^i_{h+1}|s^i_h,a^i_h)}\right) + 2 \log(HT |\Fp|/\delta)\right)
        \\
        & =
        \sum_{i=1}^{t-1} \sum_{h=0}^{H-1} \log\left(\frac{1}{{P}^{c_i}(s^i_{h+1}|s^i_h,a^i_h)} \right) -  \sum_{i=1}^{t-1} \sum_{h=0}^{H-1}\log\left(\frac{1}{P^{c_i}_\star(s^i_{h+1}|s^i_h,a^i_h)}\right) + 2 H\log(HT |\Fp|/\delta)
        .
    \end{align*}
    \endgroup
    The filtration used in $(i)$ is over the history up to time $t$, $\Hist_{t-1} = (\sigma^1, \ldots,\sigma^{t-1})$.
    Now, by taking a union bound over every $t = 1.\ldots, T$ and $P \in \Fp$, we obtain the lemma.
\end{proof}

\begin{corollary}[restatement of \cref{corl:hellinger-distance-bound-w.h.p-main}]
    Let $\widehat{P}_t \in \Fp$ be the log loss minimizer in \cref{alg:UCCRL-main}.
    For any $\delta \in (0,1)$ it holds that with probability at least $1-\delta$ we have
    \begin{align*}
        &{\E}_c \brk[s]*{
        \sum_{i=1}^{t-1}
        \mathop{\E}_{\pi_i(c;\cdot), P^{c}_\star} \Bigg[
        \sum_{h=0}^{H-1}D^2_H(P^{c}_\star(\cdot|s_{h},a_{h}),\widehat{P}_t^{c}(\cdot|s_{h},a_{h}))
        \Bigg| s_0\Bigg]}
        \leq
        2H \log(TH  |\Fp|/\delta)
        ,
        &&
        \forall 1 \le t \le T
        .
    \end{align*}
\end{corollary}

\begin{proof}
    By our assumption that $P_\star \in \Fp$, and $\widehat{P}_t$ is the log loss minimizer at time $t$, it holds that
    \[
        \sum_{i=1}^{t-1} \sum_{h=0}^{H-1} \log\left(\frac{1}{\widehat{P}^{c_i}_t(s^i_{h+1}|s^i_h,a^i_h)} \right) -  \sum_{i=1}^{t-1} \sum_{h=0}^{H-1}\log\left(\frac{1}{P^{c_i}_\star(s^i_{h+1}|s^i_h,a^i_h)}\right)
        \leq 0.
    \]
    Thus, the corollary immediately follows by
    \cref{lemma:transition-to-oracle-rgret-dynamics}. 
\end{proof}

\begin{lemma}[Stability error of log-loss oracle, restatement of \cref{lemma:log-loss-regret}]
    Let $\widehat{P}_i \in \Fp$ denote the log-loss minimizer at round $i \in [T]$.
    For any $\delta \in (0,1)$ it holds with probability at least $1-\delta$ that
    \begin{align*}
        &\E_c \brk[s]*{ \sum_{i=1}^{t-1}
        \E_{\pi_i(c;\cdot),P^c_\star} \brk[s]*{
        \sum_{h=0}^{H-1} D^2_H (P^c_\star(\cdot|s_h,a_h), \widehat{P}^c_i(s_h,a_h))
        \Bigg| s_0}}
        \leq
        112 H \dEP
        \log^2(2TH  |\Fp|/\delta)
    \end{align*}
    simultaneously, for all $t \geq 1$,
    where $\dEP \ge \dHE(\Fp, D_H, T^{-1/2})$, the Eluder dimension of $\Fp$ at scale $T^{-1/2}$.
\end{lemma}

\begin{proof}
    Let $\beta = 2H \log(2TH  |\Fp|/\delta)$ and define
    \begin{align*}
        \Fp_t
        =
        \brk[c]*{
        P\in \Fp
        \;:\;
        \sum_{i=1}^{t-1} \sum_{h=0}^{H-1} D_H^2(P^{c_i}(\cdot | s_h^i,a_h^i), \widehat{P}^{c_i}_t(\cdot | s_h^i, a_h^i)
        )
        \le
        \beta
        }
        .
\end{align*}
    Now, suppose that \cref{lemma:realized-log-loss-regret} holds with $\delta / 2$. Then, since $\widehat{P}_t$ is the log-loss minimizer, we have that $P_\star \in \Fp_t$ for all $1 \le t \le T$.
    Next, recalling that
    \begin{align*}
        w_{\Fp_t}(s,a,c)
        =
        \sup_{P,P' \in \Fp_t} D_H({P^c}(\cdot|s,a), {P'^c}(\cdot|s,a))
        ,
    \end{align*}
    thus, we have that
    \begin{align*}
        \sum_{i=1}^{t-1} \sum_{h=0}^{H-1} D_H^2(P_\star^{c_i}(\cdot | s_h^i,a_h^i), \widehat{P}^{c_i}_i(\cdot | s_h^i, a_h^i)
        \le
        \sum_{i=1}^{t-1} \sum_{h=0}^{H-1}
        \brk{w_{\Fp_i}(s_h^i, a_h^i, c_i)}^2
        .
    \end{align*}
    Applying \cref{lemma:russo-lemma2} we get that
    \begin{align*}
        \sum_{i=1}^{t-1} \sum_{h=0}^{H-1} D_H^2(P_\star^{c_i}(\cdot | s_h^i,a_h^i), \widehat{P}^{c_i}_i(\cdot | s_h^i, a_h^i)
        &
        \le
        24 \dEP H \beta \log T
        \\
        &
        \le
        48 H \dEP
        \log^2(2TH  |\Fp|/\delta)
        .
    \end{align*}
    Finally, we apply \cref{lemma:multiplicative-concentration} to get that with probability at least $1-\delta/2$
    \begin{align*}
        \E_c &\brk[s]*{ \sum_{i=1}^{t-1}
        \E_{\pi_i(c;\cdot),P^c_\star} \brk[s]*{
        \sum_{h=0}^{H-1} D^2_H (P^c_\star(\cdot|s_h,a_h), \widehat{P}^c_i(s_h,a_h))
        \Big| s_0}}
        \\
        &
        =
        \sum_{i=1}^{t-1} \sum_{h=0}^{H-1} \E\brk[s]*{D_H^2(P_\star^{c_i}(\cdot | s_h^i,a_h^i), \widehat{P}^{c_i}_i(\cdot | s_h^i, a_h^i) \Big| 
        \pi_i, s_0}
        \\
        &
        \le
        2\sum_{i=1}^{t-1} \sum_{h=0}^{H-1} D_H^2(P_\star^{c_i}(\cdot | s_h^i,a_h^i), \widehat{P}^{c_i}_i(\cdot | s_h^i, a_h^i)
        +
        16 H \log \frac{2 T H}{\delta}
        .
    \end{align*}
    Taking a union bound and combining the last two inequalities concludes the proof.
\end{proof}

\subsection{Confidence Bounds (Step 2)}
\label{sec:proofs-confidence-bounds}
In the following analysis, we use an occupancy measures-based representation of the value function.
Recall the definition of the 
\emph{occupancy measures}~\citep{zimin2013online}. 
For any non-contextual policy $\pi$ and dynamics $P$,
let $q_h(s,a | \pi, P)$ denote the probability of reaching state $s\in S$ and performing action $a\in A$ at time $h \in [H]$ of an episode generated using  policy $\pi$ and dynamics $P$. 

Using this notation, the value function of any policy $\pi$ with respect to the MDP $(S,A,P,r,s_0\linebreak[1],H)$
can be represented as follows.
\begin{align}
\label{eq:value-occupancy-representation}
    V^{\pi}_{M} (s_0)
    = 
    \sum_{h=0}^{H-1} 
    \sum_{s \in S_h}
    \sum_{a \in A}
    q_h(s,a | \pi, P)\cdot r(s, a)
    .
\end{align}
Thus, the following is an immediate corollary of~\cref{lemma:value-change-of-measure-main}.
\begin{corollary}
\label{corl:occupancy-change-of-measure}
    For any (non-contextual) policy $\pi$, two dynamics $P$ and $\widehat{P}$, and rewards function $r$ that is bounded in $[0,1]$ it holds that
    \begin{align*}
        \sum_{h=0}^{H-1}
        \sum_{s \in S_h}
        \sum_{a \in A}
        q_h(s,a| \pi,\widehat{P})\cdot
        r(s,a)
        &
        \leq 
        3\sum_{h=0}^{H-1}
        \sum_{s \in S_h}
        \sum_{a \in A}
        q_h(s,a| \pi,P)\cdot
        r(s,a)
        \\
        & +
        9H^2
        \sum_{h=0}^{H-1}
        \sum_{s \in S_h}
        \sum_{a \in A}
        q_h(s,a| \pi,P)\cdot
         D^2_H(P(\cdot|s,a), \widehat{P}(\cdot|s,a)) 
        .
    \end{align*}
\end{corollary}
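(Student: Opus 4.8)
The plan is to recognize both sides of the inequality as value functions via the occupancy-measure representation in \cref{eq:value-occupancy-representation}, and then invoke \cref{lemma:value-change-of-measure-main} directly, so that the corollary follows with essentially no new work. First I would note that, applying \cref{eq:value-occupancy-representation} at $h=0$ and $s=s_0$, the left-hand side equals $V^\pi_{\widehat{M}}(s_0)$ for the MDP $\widehat{M} = (S,A,\widehat{P},r,s_0,H)$, while the inner sum in the first term on the right equals $V^\pi_{M}(s_0)$ for $M = (S,A,P,r,s_0,H)$. Thus the claim is merely a restatement of a value change-of-measure bound between $\widehat{M}$ and $M$, once the Hellinger term is put into occupancy form.

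Second, I would apply \cref{lemma:value-change-of-measure-main} with its $P_\star$ taken to be $P$ (the lemma is stated for two arbitrary dynamics, and $r\in[0,1]$ as required), which yields
\begin{equation*}
    V^\pi_{\widehat{M}}(s_0)
    \le
    3 V^\pi_M(s_0)
    +
    9H^2 \mathop{\E}_{P,\pi}\brk[s]*{\sum_{h=0}^{H-1} D^2_H(\widehat{P}(\cdot|s_h,a_h), P(\cdot|s_h,a_h)) ~\Bigg|~ s_0}.
\end{equation*}

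Third, I would convert the trajectory expectation into the occupancy-weighted sum appearing in the statement. By linearity of expectation and the definition of $q_h(\cdot,\cdot\,|\,\pi,P)$ as the probability of occupying a state-action pair at layer $h$ under $\pi$ and $P$ started from $s_0$, each summand satisfies
\begin{equation*}
    \mathop{\E}_{P,\pi}\brk[s]*{D^2_H(\widehat{P}(\cdot|s_h,a_h),P(\cdot|s_h,a_h)) ~\big|~ s_0}
    =
    \sum_{s\in S_h}\sum_{a\in A} q_h(s,a|\pi,P)\, D^2_H(\widehat{P}(\cdot|s,a),P(\cdot|s,a)).
\end{equation*}
Summing over $h$ and using the symmetry $D^2_H(\widehat{P},P)=D^2_H(P,\widehat{P})$ to match the order of arguments in the statement then completes the argument.

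The only point that requires care — and it is entirely elementary — is this last conversion: one must check that the loop-free layer decomposition makes the per-layer expectations of the Hellinger terms coincide exactly with the occupancy weights $q_h(s,a|\pi,P)$, with no double counting across layers. Everything else is a direct substitution into \cref{lemma:value-change-of-measure-main}, which is why the result is labelled an immediate corollary.
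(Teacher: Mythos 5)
Your proposal is correct and matches the paper's own treatment: the paper simply declares the corollary an immediate consequence of \cref{lemma:value-change-of-measure-main} via the occupancy-measure representation of the value function in \cref{eq:value-occupancy-representation}, which is precisely the substitution you carry out. Your extra care in converting the trajectory expectation of the Hellinger terms into occupancy-weighted sums (and noting the symmetry of $D_H^2$) is exactly the implicit step the paper leaves to the reader.
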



We are now ready to prove the confidence bounds. Recall the reward bonuses $b_{t,h}^R, b_{t,h}^P$ defined in \cref{eq:reward-bonuses}.
\begin{lemma}[restatement of \cref{lemma:CB-rewards-approx-p-main}]
    Let $P_\star$ and $f_\star$ be the true context dependent dynamics and rewards. Let $\widehat{P}_t$ and $\widehat{f}_t$ be the approximated context-dependent dynamics and  rewards at round $t$.
    Then, 
    for any $t\geq 1$, and context-dependent policy $\pi \in \Pi_\C$ the following holds.
    \begingroup \allowdisplaybreaks
    \begin{align*}
        &
        \abs*{
        \E_c\left[ V^{\pi(c;\cdot)}_{\M^{(f_\star,\widehat{P}_t)}(c)}(s_0) 
        \right]
        -
         \E_c\left[ V^{\pi(c;\cdot)}_{\M^{(\widehat{f}_t, \widehat{P}_t)}(c)}(s_0) \right]
        }
        \\
        &
        \hspace{10em}
        \leq 
        \E_c\left[
        \sum_{h=0}^{H-1}
        \sum_{s_h \in S^c_h} \sum_{a_h \in A}
        q_h(s_h,a_h|\pi(c;\cdot),\widehat{P}^c_t)
        b_{t,h}^R(c,s_h,a_h)
        \right]
        \\
        &\hspace{10em} +
        \frac{3}{2\betaR}
        \E_c \left[ \sum_{i=1}^{t-1} 
        \mathop{\E}_{\pi_i(c;\cdot), P^{c}_\star} \Bigg[\sum_{h=0}^{H-1} 
        \left( f_\star(c,s_h,a_h) - \widehat{f}_t(c,s_h,a_h)\right)^2 \Bigg|
        s_0
        \right]\Bigg]
        \\
        & \hspace{10em} +
        \frac{9 H^2}{2\betaR} \E_c \left[ \sum_{i=1}^{t-1} 
        \mathop{\E}_{\pi_i(c;\cdot), P^{c}_\star} \Bigg[\sum_{h=0}^{H-1}D^2_H(P^{c}_\star(\cdot|s_{h},a_{h}), \widehat{P}^{c}_i(\cdot|s_{h},a_{h}))\Bigg| s_0\Bigg]\right]
        \\
        & \hspace{10em}+
        \frac{H}{2 \betaR}
        .
    \end{align*}
    \endgroup
\end{lemma}
\begin{proof}
We have that
\begingroup\allowdisplaybreaks
\begin{align*}
    &\left| 
        \E_c\left[ V^{\pi(c;\cdot)}_{\M^{(f_\star,\widehat{P}_t)}(c)}(s_0) \right]
        -
        \E_c\left[ V^{\pi(c;\cdot)}_{\M^{(\widehat{f}_t, \widehat{P}_t)}(c)}(s_0) \right]
    \right|
    \\
    \tag{By linearity of expectation}
    &
    =
    \left| 
        \E_c\left[ V^{\pi(c;\cdot)}_{\M^{(f_\star,\widehat{P}_t)}(c)}(s_0)
         -
        V^{\pi(c;\cdot)}_{\M^{(\widehat{f}_t, \widehat{P}_t)}(c)}(s_0) \right]
    \right|
    \\
    \tag{\cref{eq:value-occupancy-representation}}
    &
    = 
    \left| 
        \E_c\left[
        \sum_{h=0}^{H-1} \sum_{s_h \in S^c_h} \sum_{a_h \in A} q_h(s_h,a_h| \pi(c;\cdot), \widehat{P}^c_t)\cdot \left( \widehat{f}_t(c,s_h,a_h) - f_\star(c,s_h,a_h) \right)
        \right]
    \right|
    \\
    \tag{Triangle ineq.}
    &
    \leq
    \E_c\left[
    \sum_{h=0}^{H-1} \sum_{s_h \in S^c_h} \sum_{a_h \in A} q_h(s_h,a_h| \pi(c;\cdot), \widehat{P}^c_t)\cdot \left| \widehat{f}_t(c,s_h,a_h) - f_\star(c,s_h,a_h) \right|
    \right]
    \\
    &
    =
    \E_c\left[
    \sum_{h=0}^{H-1}
    \min\left\{
    1
    ,
    \sum_{s_h \in S^c_h} \sum_{a_h \in A} q_h(s_h,a_h| \pi(c;\cdot), \widehat{P}^c_t)\cdot \left| \widehat{f}_t(c,s_h,a_h) - f_\star(c,s_h,a_h) \right|
    \right\}
    \right]
    \\
    \tag{AM-GM}
    &
    \leq
    \E_c\Bigg[
    \sum_{h=0}^{H-1}
    \min\Bigg\{
    1
    ,
    \sum_{s_h \in S^c_h} \sum_{a_h \in A}
    \frac{\betaR}{2}
    \frac{q_h(s_h,a_h|\pi(c;\cdot),\widehat{P}^c_t)}{1 + \sum_{i=1}^{t-1}q_h(s_h,a_h| \pi_i(c;\cdot), \widehat{P}^c_i)}
    \\
    &
    +
    \frac{1}{2\betaR}
    q_h(s_h,a_h|\pi(c;\cdot),\widehat{P}^c_t)\left(1 + \sum_{i=1}^{t-1}q_h(s_h,a_h| \pi_i(c;\cdot),\widehat{P}^c_i)\right) \left( f_\star(c,s_h,a_h) - \widehat{f}_t(c,s_h,a_h)\right)^2 
    \Bigg\}
    \Bigg]
    \\
    &
    \leq 
    \E_c\left[
    \sum_{h=0}^{H-1}
    \min\left\{
    1
    ,
    \sum_{s_h \in S^c_h} \sum_{a_h \in A}
    \frac{\betaR}{2}
    \frac{q_h(s_h,a_h|\pi(c;\cdot),\widehat{P}^c_t)}{1 + \sum_{i=1}^{t-1}q_h(s_h,a_h| \pi_i(c;\cdot), \widehat{P}^c_i)}
    \right\}
    \right]
    \\
    &
    +
    \frac{H}{2 \betaR}
    +
    \frac{1}{2\betaR}
    \E_c\left[\sum_{i=1}^{t-1}\sum_{h=0}^{H-1} \sum_{s_h \in S^c_h}\sum_{a_h \in A}
    q_h(s_h,a_h| \pi_i(c;\cdot),\widehat{P}^c_i) \left( f_\star(c,s_h,a_h) - \widehat{f}_t(c,s_h,a_h)\right)^2 
    \right]
    \\
    &
    \le
    \E_c\left[
    \sum_{h=0}^{H-1}
    \sum_{s_h \in S^c_h} \sum_{a_h \in A}
    q_h(s_h,a_h|\pi(c;\cdot),\widehat{P}^c_i)
    \min\left\{
    1
    ,\frac{\betaR/2}{1 + \sum_{i=1}^{t-1}q_h(s_h,a_h| \pi_i(c;\cdot), \widehat{P}^c_i)}
    \right\}
    \right]
    \\
    \tag{\cref{corl:occupancy-change-of-measure}}
    &
    +
    \frac{3}{2\betaR}
     \E_c \left[ \sum_{i=1}^{t-1} 
    \mathop{\E}_{\pi_i(c;\cdot), P^{c}_\star} \Bigg[
    \sum_{h=0}^{H-1} 
    \left( f_\star(c,s_h,a_h) - \widehat{f}_t(c,s_h,a_h)\right)^2 
    \Big| s_0
    \right]\Bigg]
    \\
    & 
    +
    \frac{9 H^2}{2\betaR} \E_c \left[ \sum_{i=1}^{t-1} 
    \mathop{\E}_{\pi_i(c;\cdot), P^{c}_\star} \Bigg[\sum_{h=0}^{H-1}D^2_H(P^{c}_\star(\cdot|s_{h},a_{h}), \widehat{P}^{c}_i(\cdot|s_{h},a_{h}))\Bigg| s_0\Bigg]\right]
    +
    \frac{H}{2 \betaR}
    ,
\end{align*}
\endgroup
and the lemma follows by $b^R_{t,h}$ definition.
\end{proof}

\begin{lemma}[restatement of \cref{lemma:CB-dynamics-true-r-main}]
    Let $P_\star$ and $f_\star$ be the true context dependent dynamics and rewards. Let $\widehat{P}_t$ be the approximated context-dependent dynamics at round $t$.
    Then, 
    for any $t\geq 1$, and context-dependent policy $\pi \in \Pi_\C$ we have
    \begingroup
    \allowdisplaybreaks
    \begin{align*}  
        &
        \abs*{
        \E_c\brk[s]*{V^{\pi(c;\cdot)}_{\M^{({f}_\star,P_\star)}(c)}(s_0) 
        }
        -
        \E_c\brk[s]*{
        V^{\pi(c;\cdot)}_{\M^{({f}_\star, \widehat{P}_t)}(c)}(s_0)}
        }
        \leq 
        \E_c\brk[s]*{
        \sum_{h=0}^{H-1}
        \sum_{s \in S_h^c}
        \sum_{a \in A}
        q_h(s,a| \pi(c;\cdot), \widehat{P}^c_t )
        \cdot b^P_{t,h}(c,s,a)
        }
        \\
        & \hspace{10em}+
        \frac{6 H}{ \betaP}
        \E_c \left[\sum_{i=1}^{t-1} 
        \mathop{\E}_{\pi_i(c;\cdot), P^{c}_\star} \Bigg[
        \sum_{h=0}^{H-1}
         D^2_H(P^{c}_\star(\cdot|s_{h},a_{h}), \widehat{P}^{c}_t(\cdot|s_{h},a_{h}))
        \Bigg| s_0\Bigg]\right]
        \\
        & \hspace{10em}+
        \frac{ 18 H^3}{\betaP}
        \E_c \left[\sum_{i=1}^{t-1} 
        \mathop{\E}_{\pi_i(c;\cdot), P^{c}_\star} \Bigg[
        \sum_{h=0}^{H-1}
         D^2_H(P^{c}_\star(\cdot|s_{h},a_{h}), \widehat{P}^{c}_i(\cdot|s_{h},a_{h}))
        \Bigg| s_0\Bigg]\right]
        \\
        & \hspace{10em}+
        \frac{H^2}{2 \betaP}
        .
    \end{align*}
    \endgroup
\end{lemma}
\begin{proof}
    The following holds 
    for any $t \geq 1$ and a context-dependent policy $\pi \in \Pi_\C$.
    \begingroup
    \allowdisplaybreaks
    \begin{align*}          
       &\left|\E_c[V^{\pi(c;\cdot)}_{\M^{({f}_\star,P_\star)}(c)}(s_0)] - \E_c[V^{\pi(c;\cdot)}_{\M^{({f}_\star, \widehat{P}_t)}(c)}(s_0)]\right|
        \\
         \tag{By linearity of expectation}
        &
        = 
        \left|\E_c\left[V^{\pi(c;\cdot)}_{\M^{(f_\star,P_\star)}(c)}(s_0) - V^{\pi(c;\cdot)}_{\M^{(f_\star, \widehat{P}_t)}(c)}(s_0)\right]\right|
        \\
        \tag{\cref{lemma:val-diff}}
        &
        =
        \left|
        \E_c \left[ 
        \E_{\pi(c;\cdot),\widehat{P}^c_t}
        \left[
        \sum_{h=0}^{H-1} 
        \sum_{s' \in S}
        (P^c_\star(s'|s_h,a_h) - \widehat{P}^c_t(s'|s_h,a_h))V^{\pi(c;\cdot)}_{\M^{(f_\star, {P}_\star)},h+1}(s') \right] \Bigg| s_0 \right]
        \right|
        \\
        &
        =
        \tag{\cref{eq:value-occupancy-representation}}
        \left|
        \E_c 
        \left[
        \sum_{h=0}^{H-1}
        \sum_{s_h \in S_h^c}
        \sum_{a_h \in A}
        q_h(s_h,a_h| \pi(c;\cdot), \widehat{P}^c_t ) 
        \sum_{s' \in S}
        (P^c_\star(s'|s_h,a_h) - \widehat{P}^c_t(s'|s_h,a_h))V^{\pi(c;\cdot)}_{\M^{(f_\star, {P}_\star)},h+1}(s') \right] \right|
        \\
        &
        \le
        \tag{Triangle inequality}
        \E_c 
        \left[
        \sum_{h=0}^{H-1}
        \sum_{s_h \in S_h^c}
        \sum_{a_h \in A}
        q_h(s_h,a_h| \pi(c;\cdot), \widehat{P}^c_t ) 
        \left|
        \sum_{s' \in S}
        (P^c_\star(s'|s_h,a_h) - \widehat{P}^c_t(s'|s_h,a_h))V^{\pi(c;\cdot)}_{\M^{({f}_\star, {P}_\star)},h+1}(s') 
        \right|
        \right] 
        \\
        \tag{$f_\star \in [0,1]$, $V^{\pi(c;\cdot)}_{\M^{({f}_\star, {P}_\star)},h}(s') \in [0,H]$ for all $h \in [H]$ and $s' \in S$}
        &
        \le
        H
        \E_c 
        \left[
        \sum_{h=0}^{H-1}
        \min\brk[c]*{1,
        \sum_{s_h \in S_h^c}
        \sum_{a_h \in A}
        q_h(s_h,a_h| \pi(c;\cdot), \widehat{P}^c_t ) 
        \sum_{s' \in S}
        \left|
        P^c_\star(s'|s_h,a_h) - \widehat{P}^c_t(s'|s_h,a_h)  
        \right|
        }
        \right]
        \\
        \tag{AM-GM}
        &
        \le
        H
        \E_c 
        \Bigg[
        \sum_{h=0}^{H-1}
        \min\Bigg\{1,
        \sum_{s_h \in S_h^c}
        \sum_{a_h \in A}
        \Bigg(
        \frac{\betaP}{2}
        \frac{q_h(s_h,a_h| \pi(c;\cdot), \widehat{P}^c_t )}
        {1 +\sum_{i=1}^{t-1} q_h(s_h,a_h | \pi_i(c;\cdot),\widehat{P}^c_i)}
        \\
        &
        +
        \frac{q_h(s_h,a_h| \pi(c;\cdot), \widehat{P}^c_t )}{2 \betaP}
        \brk{1 +\sum_{i=1}^{t-1} q_h(s_h,a_h | \pi_i(c;\cdot),\widehat{P}^c_i)}
        \brk*{
        \sum_{s' \in S}
        \left|
        P^c_\star(s'|s_h,a_h) - \widehat{P}^c_t(s'|s_h,a_h)
        \right|
        }^2
        \Bigg)
        \Bigg\}
        \Bigg]
        \\
        &
        \le
        \E_c 
        \left[
        \sum_{h=0}^{H-1}
        \sum_{s_h \in S_h^c}
        \sum_{a_h \in A}
        q_h(s_h,a_h| \pi(c;\cdot), \widehat{P}^c_t )
        H\min\brk[c]*{
         1
        ,
        \frac{\betaP  / 2}
        {1 +\sum_{i=1}^{t-1} q_h(s_h,a_h | \pi_i(c;\cdot),\widehat{P}^c_i)}
        }
        \right]
        +
        \frac{H^2}{2 \betaP}
        \\
        & 
        +
        \frac{H}{2 \betaP}
        \E_c 
        \Bigg[
        \sum_{i=1}^{t-1} 
        \sum_{h=0}^{H-1}
        \sum_{s_h \in S_h^c}
        \sum_{a_h \in A}
        q_h(s_h,a_h | \pi_i(c;\cdot),\widehat{P}^c_i)
        \brk*{
        \sum_{s' \in S}
        \left|
        P^c_\star(s'|s_h,a_h) - \widehat{P}^c_t(s'|s_h,a_h)
        \right|
        }^2
        \Bigg]
        \\
        &
        \le
        \E_c 
        \left[
        \sum_{h=0}^{H-1}
        \sum_{s_h \in S_h^c}
        \sum_{a_h \in A}
        q_h(s_h,a_h| \pi(c;\cdot), \widehat{P}^c_t )
        H\min\brk[c]*{
         1
        ,
        \frac{\betaP  / 2}
        {1 +\sum_{i=1}^{t-1} q_h(s_h,a_h | \pi_i(c;\cdot),\widehat{P}^c_i)}
        }
        \right]
        +
        \frac{H^2}{2 \betaP}
        \\
        &
        +
        \frac{3 H}{2 \betaP}
        \E_c 
        \Bigg[
        \sum_{i=1}^{t-1} 
        \sum_{h=0}^{H-1}
        \sum_{s_h \in S_h^c}
        \sum_{a_h \in A}
        q_h(s_h,a_h | \pi_i(c;\cdot),{P}_\star^c)
        \brk*{
        \sum_{s' \in S}
        \left|
        P^c_\star(s'|s_h,a_h) - \widehat{P}^c_t(s'|s_h,a_h)
        \right|
        }^2
        \Bigg]
        \\
        \tag{\cref{corl:occupancy-change-of-measure}}
        &
        +
        \frac{ 18 H^3}{\betaP}
        \E_c \left[\sum_{i=1}^{t-1} 
        \mathop{\E}_{\pi_i(c;\cdot), P^{c}_\star} \Bigg[
        \sum_{h=0}^{H-1}
         D^2_H(P^{c}_\star(\cdot|s_{h},a_{h}), \widehat{P}^{c}_i(\cdot|s_{h},a_{h}))
        \Bigg| s_0\Bigg]\right]
        \\
        &
        \le
        \E_c 
        \left[
        \sum_{h=0}^{H-1}
        \sum_{s_h \in S_h^c}
        \sum_{a_h \in A}
        q_h(s_h,a_h| \pi(c;\cdot), \widehat{P}^c_t )
        H\min\brk[c]*{
         1
        ,
        \frac{\betaP  / 2}
        {1 +\sum_{i=1}^{t-1} q_h(s_h,a_h | \pi_i(c;\cdot),\widehat{P}^c_i)}
        }
        \right]
        +
        \frac{H^2}{2 \betaP}
        \\
        \tag{$\mathrm{TV}^2 \le  4 D_H^2$}
        & 
        +
        \frac{6 H}{ \betaP}
        \E_c 
        \Bigg[
        \sum_{i=1}^{t-1} 
        \sum_{h=0}^{H-1}
        \sum_{s_h \in S_h^c}
        \sum_{a_h \in A}
        q_h(s_h,a_h | \pi_i(c;\cdot),{P}_\star^c)
        D^2_H(P^c_\star(\cdot|s_h,a_h) , \widehat{P}^c_t(\cdot|s_h,a_h))
        \Bigg]
        \\
        & 
        +
        \frac{ 18 H^3}{\betaP}
        \E_c \left[\sum_{i=1}^{t-1} 
        \mathop{\E}_{\pi_i(c;\cdot), P^{c}_\star} \Bigg[
        \sum_{h=0}^{H-1}
         D^2_H(P^{c}_\star(\cdot|s_{h},a_{h}), \widehat{P}^{c}_i(\cdot|s_{h},a_{h}))
        \Bigg| s_0\Bigg]\right]
        \\
        &
        =
        \E_c 
        \left[
        \sum_{h=0}^{H-1}
        \sum_{s_h \in S_h^c}
        \sum_{a_h \in A}
        q_h(s_h,a_h| \pi(c;\cdot), \widehat{P}^c_t )
        b^P_{t,h}(c,s_h,a_h)
        \right]
        +
        \frac{H^2}{2 \betaP}
        \\
        &
        +
        \frac{6 H}{ \betaP}
        \E_c \left[\sum_{i=1}^{t-1} 
        \mathop{\E}_{\pi_i(c;\cdot), P^{c}_\star} \Bigg[
        \sum_{h=0}^{H-1}
         D^2_H(P^{c}_\star(\cdot|s_{h},a_{h}), \widehat{P}^{c}_t(\cdot|s_{h},a_{h}))
        \Bigg| s_0\Bigg]\right]
        \\
        &
        +
        \frac{ 18 H^3}{\betaP}
        \E_c \left[\sum_{i=1}^{t-1} 
        \mathop{\E}_{\pi_i(c;\cdot), P^{c}_\star} \Bigg[
        \sum_{h=0}^{H-1}
         D^2_H(P^{c}_\star(\cdot|s_{h},a_{h}), \widehat{P}^{c}_i(\cdot|s_{h},a_{h}))
        \Bigg| s_0\Bigg]\right]
        .
    \end{align*}
    \endgroup
\end{proof}

\begin{corollary}[restatement of \cref{corl:UCB-value-main}]
    Under the terms of \cref{lemma:CB-rewards-approx-p-main,lemma:CB-dynamics-true-r-main}, the following holds with probability at least $1-3\delta/4$ simultaneously for all $t \geq 1$ and $\pi \in \Pi_\C$:
    \begingroup \allowdisplaybreaks
    \begin{align*}
        &
        \abs*{
        \E_c\left[ V^{\pi(c;\cdot)}_{\M^{(f_\star,{P}_\star)}(c)}(s_0) 
        \right]
        -
         \E_c\left[ V^{\pi(c;\cdot)}_{\M^{(\widehat{f}_t, \widehat{P}_t)}(c)}(s_0) \right]
        }
        \\
        &
        \le
        \E_c\left[
        \sum_{h=0}^{H-1}
        \sum_{s_h \in S^c_h} \sum_{a_h \in A}
        q_h(s_h,a_h|\pi(c;\cdot),\widehat{P}^c_t)
        \brk*{
        b_{t,h}^R(c,s_h,a_h)
        +
        b_{t,h}^P(c,s_h,a_h)
        }
        \right]
        \\
        &
        +
        \frac{2029 H^4 \dEP}{\betaP}
        \log^2(8TH  |\Fp|/\delta)
        +
        \frac{504 H^3 \dEP }{\betaR} 
        \log^2(64T^4 H  |\F| |\Fp|/\delta^2)
        .
    \end{align*}
    \endgroup
\end{corollary}


\begin{proof}
    We begin by taking a union bound on the events of \cref{corl:reward-distance-bound-w.h.p-main,corl:hellinger-distance-bound-w.h.p-main,lemma:log-loss-regret} to get that with probability at least $1-3\delta/4$, simultaneously for all $t \ge 1$
    \begin{align}
        \nonumber
        &{\E}_c \brk[s]*{
        \sum_{i=1}^{t-1}
        \mathop{\E}_{\pi_i(c;\cdot), P^{c}_\star} \Bigg[
        \sum_{h=0}^{H-1}
        \brk*{
        \widehat{f}_t(c, s_h, a_h) - f_\star(c, s_h, a_h)
        }^2
        \Bigg| s_0\Bigg]}
        \leq
        68H \log(8 T^3 |\F|/\delta)
        \\
        \label{eq:oracle-bounds}
        &{\E}_c \brk[s]*{
        \sum_{i=1}^{t-1}
        \mathop{\E}_{\pi_i(c;\cdot), P^{c}_\star} \Bigg[
        \sum_{h=0}^{H-1}D^2_H(P^{c}_\star(\cdot|s_{h},a_{h}),\widehat{P}_t^{c}(\cdot|s_{h},a_{h}))
        \Bigg| s_0\Bigg]}
        \leq
        2H \log(4TH  |\Fp|/\delta)
        \\
        \nonumber
        &{\E}_c \brk[s]*{
        \sum_{i=1}^{t-1}
        \mathop{\E}_{\pi_i(c;\cdot), P^{c}_\star} \Bigg[
        \sum_{h=0}^{H-1}D^2_H(P^{c}_\star(\cdot|s_{h},a_{h}),\widehat{P}_i^{c}(\cdot|s_{h},a_{h}))
        \Bigg| s_0\Bigg]}
        \leq
        112 H \dEP
        \log^2(8TH  |\Fp|/\delta)
        .
    \end{align}
    Assuming this event holds, we get that for all $t \geq 1$ and context-dependent policy $\pi \in \Pi_\C$.
     \begingroup\allowdisplaybreaks
     \begin{align*}
        & \abs*{
        \E_c\left[ V^{\pi(c;\cdot)}_{\M^{(f_\star,{P}_\star)}(c)}(s_0) \right]
        -
        \E_c\left[ V^{\pi(c;\cdot)}_{\M^{(\widehat{f}_t, \widehat{P}_t)}(c)}(s_0) \right]
        }
        \\
        \tag{By triangle inequality}
        & 
        \le
        \abs*{
        \E_c\left[ V^{\pi(c;\cdot)}_{\M^{(f_\star,{P}_\star)}(c)}(s_0) \right]
        -
        \E_c\left[ V^{\pi(c;\cdot)}_{\M^{({f}_\star, \widehat{P}_t)}(c)}(s_0) \right]
        }
        \\
        &
        \hspace{1em}
        +
        \abs*{
        \E_c\left[ V^{\pi(c;\cdot)}_{\M^{(f_\star,\widehat{P}_t)}(c)}(s_0) \right]
        -
        \E_c\left[ V^{\pi(c;\cdot)}_{\M^{(\widehat{f}_t, \widehat{P}_t)}(c)}(s_0) \right]
        }
        \\
        \tag{\cref{lemma:CB-rewards-approx-p-main,lemma:CB-dynamics-true-r-main}}
        &
        \le
        \E_c\left[
        \sum_{h=0}^{H-1}
        \sum_{s_h \in S^c_h} \sum_{a_h \in A}
        q_h(s_h,a_h|\pi(c;\cdot),\widehat{P}^c_t)
        \brk*{
        b_{t,h}^R(c,s_h,a_h)
        +
        b_{t,h}^P(c,s_h,a_h)
        }
        \right]
        \\
        &
        \hspace{1em}
        +
        \frac{3}{2\betaR}
        \E_c \left[ \sum_{i=1}^{t-1} 
        \mathop{\E}_{\pi_i(c;\cdot), P^{c}_\star} \Bigg[\sum_{h=0}^{H-1} 
        \left( f_\star(c,s_h,a_h) - \widehat{f}_t(c,s_h,a_h)\right)^2 \Bigg|
        s_0
        \right]\Bigg]
        \\
        & 
        \hspace{1em}
        +
        \frac{6H}{\betaP}
        \E_c \left[ \sum_{i=1}^{t-1} 
        \mathop{\E}_{\pi_i(c;\cdot), P^{c}_\star} \Bigg[\sum_{h=0}^{H-1}D^2_H(P^{c}_\star(\cdot|s_{h},a_{h}), \widehat{P}^{c}_t(\cdot|s_{h},a_{h}))\Bigg| s_0\Bigg]\right]
        \\
        & 
        \hspace{1em}
        +
        \brk*{
        \frac{9 H^2}{2\betaR} 
        +
        \frac{18 H^3}{\betaP}
        }
        \E_c \left[ \sum_{i=1}^{t-1} 
        \mathop{\E}_{\pi_i(c;\cdot), P^{c}_\star} \Bigg[\sum_{h=0}^{H-1}D^2_H(P^{c}_\star(\cdot|s_{h},a_{h}), \widehat{P}^{c}_i(\cdot|s_{h},a_{h}))\Bigg| s_0\Bigg]\right]
        \\
        &
        \hspace{1em}
        +
        \frac{H}{2 \betaR}
        +
        \frac{H^2}{2 \betaP}
        \\
        \tag{\cref{eq:oracle-bounds}}
        &
        \leq
        \E_c\left[
        \sum_{h=0}^{H-1}
        \sum_{s_h \in S^c_h} \sum_{a_h \in A}
        q_h(s_h,a_h|\pi(c;\cdot),\widehat{P}^c_t)
        \brk*{
        b_{t,h}^R(c,s_h,a_h)
        +
        b_{t,h}^P(c,s_h,a_h)
        }
        \right]
        \\
        &
        \hspace{1em}
        +
        \frac{3}{2\betaR}
        68H \log(8 T^3 |\F|/\delta)
        +
        \frac{6 H}{\betaP}
        2H \log(4TH  |\Fp|/\delta)
        \\
        &
        \hspace{1em}
        +
        \brk*{
        \frac{9 H^2}{2\betaR} 
        +
        \frac{18 H^3}{\betaP}
        }
         112 H \dEP
        \log^2(8TH  |\Fp|/\delta)
        \\
        &
        \hspace{1em}
        +
        \frac{H}{2 \betaR}
        +
        \frac{H^2}{2 \betaP}
        \\
        &
        \leq
        \E_c\left[
        \sum_{h=0}^{H-1}
        \sum_{s_h \in S^c_h} \sum_{a_h \in A}
        q_h(s_h,a_h|\pi(c;\cdot),\widehat{P}^c_t)
        \brk*{
        b_{t,h}^R(c,s_h,a_h)
        +
        b_{t,h}^P(c,s_h,a_h)
        }
        \right]
        \\
        &
        \hspace{1em}
        +
        \frac{2029 H^4 \dEP}{\betaP}
        \log^2(8TH  |\Fp|/\delta)
        +
        \frac{504 H^3 \dEP }{\betaR} 
        \log^2(64T^4 H  |\F| |\Fp|/\delta^2)
        ,
     \end{align*}
     \endgroup
     as stated.
\end{proof}

\subsection{Establishing Optimism Lemmas (Step 3)}
\label{sec:proofs-optimism}

\begin{lemma}[restatement of \cref{lemma:opt-in-expectation-main}]
    Let $\pi_\star$ be an optimal context-dependent policy for $\M$.
    Under the good event of \cref{corl:UCB-value-main}, we have that for any $t \geq 1$
    \begingroup \allowdisplaybreaks
    \begin{align*}
        \E_c\left[V^{\pi_\star(c;\cdot)}_{\M(c)}\right]
        \leq &
        \E_c \left[ V^{\pi_t(c;\cdot)}_{\Mhat_t(c)}(s_0)\right]
        +
        \frac{2029 H^4 \dEP}{\betaP}
        \log^2(8TH  |\Fp|/\delta)
        \\
        &
        \hspace{7em}
        +
        \frac{504 H^3 \dEP }{\betaR} 
        \log^2(64T^4 H  |\F| |\Fp|/\delta^2)
        .
    \end{align*}
    \endgroup
\end{lemma}

\begin{proof}
    Fix any round $t \geq 1$ consider the following derivation.
    \begingroup\allowdisplaybreaks
    \begin{align*}
        \E_c\left[V^{\pi_\star(c;\cdot)}_{\M(c)}(s_0)\right]
        &
        = 
        \E_c\left[V^{\pi_\star(c;\cdot)}_{\M^{(f_\star,P_\star)}(c)}(s_0)\right]
        \\
        \tag{By \cref{corl:UCB-value-main}}
        &
        \leq 
        \E_c[V^{\pi_\star(c;\cdot)}_{\M^{(\widehat{f}_t, \widehat{P}_t)}(c)}(s_0)]
        \\
        &
        \hspace{1em}
        +
        \E_c 
        \left[
        \sum_{h=0}^{H-1}
        \sum_{s_h \in S_h^c}
        \sum_{a_h \in A}
        q_h(s_h,a_h| \pi(c;\cdot), \widehat{P}^c_t )
        \brk*{
        b_{t,h}^R(c,s_h,a_h)
        +
        b_{t,h}^P(c,s_h,a_h)
        }
        \right]
        \\
        &
        \hspace{1em}
        +
        \frac{2029 H^4 \dEP}{\betaP}
        \log^2(8TH  |\Fp|/\delta)
        +
        \frac{504 H^3 \dEP }{\betaR} 
        \log^2(64T^4 H  |\F| |\Fp|/\delta^2)
        \\
        \tag{\cref{eq:value-occupancy-representation}}
        = &
        \E_c[V^{\pi_\star(c;\cdot)}_{\Mhat_t(c)}(s_0)]
        +
        \frac{2029 H^4 \dEP}{\betaP}
        \log^2(8TH  |\Fp|/\delta)
        \\
        &
        \hspace{1em}
        +
        \frac{504 H^3 \dEP }{\betaR} 
        \log^2(64T^4 H  |\F| |\Fp|/\delta^2)
        \\
        \tag{$\pi_t$ is optimal in $\Mhat_t$}
        \leq &
        \E_c \left[ V^{\pi_t(c;\cdot)}_{\Mhat_t(c)}(s_0)\right]
        +
        \frac{2029 H^4 \dEP}{\betaP}
        \log^2(8TH  |\Fp|/\delta)
        \\
        &
        \hspace{1em}
        +
        \frac{504 H^3 \dEP }{\betaR} 
        \log^2(64T^4 H  |\F| |\Fp|/\delta^2)
        ,
    \end{align*}
    \endgroup
    as the lemma states.
\end{proof}

\begin{lemma}[restatement of \cref{lemma:optimism-cost-main}]
    Under the good event of \cref{corl:UCB-value-main}, we have that for every $t \geq 1$
    \begin{align*}
        \E_c\left[ V^{\pi_t(c;\cdot)}_{\Mhat_t(c)}(s_0)\right]
        \leq
        \E_c\left[ V^{\pi_t(c;\cdot)}_{\M(c)}(s_0)\right]
        &
        +
        2 \sum_{h=0}^{H-1}\E_{c} \brk[s]*{
        \mathop{\E}_{\pi_t(c;\cdot), \widehat{P}_t^c} \brk[s]*{
        b^R_{t,h}(c,s_h,a_h) + b^P_{t,h}(c,s_h,a_h)
        }
        }
        \\
        &
        +
        \frac{2029 H^4 \dEP}{\betaP}
        \log^2(8TH  |\Fp|/\delta)
        \\
        &
        +
        \frac{504 H^3 \dEP }{\betaR} 
        \log^2(64T^4 H  |\F| |\Fp|/\delta^2)
        .
    \end{align*}
\end{lemma}
\begin{proof}
    For all $t \ge 1$ the following holds.
    \begingroup\allowdisplaybreaks
    \begin{align*}
       &\E_c\left[ V^{\pi_t(c;\cdot)}_{\Mhat_t(c)}(s_0)\right]
       \\
       \tag{\cref{eq:value-occupancy-representation}}
       &
       =
       \E_c\left[
        \sum_{h=0}^{H-1} \sum_{s \in S^c_h}\sum_{a_h \in A} q_h(s,a|\pi_t(c;\cdot),\widehat{P}^c_t)
        \cdot \left( \widehat{f}_t (c,s_h,a_h) + b^R_{t,h}(c,s_h,a_h) +  b^P_{t,h}(c,s_h,a_h) \right) \right]
        \\
        &
        =
        \E_c\left[ V^{\pi_t(c;\cdot)}_{\M^{(\widehat{f}_t, \widehat{P}_t)}(c)}(s_0)\right]
        \\
        &
        \qquad
        +
        \E_c\left[
        \sum_{h=0}^{H-1}
        \sum_{s_h \in S^c_h} \sum_{a_h \in A}
        q_h(s_h,a_h|\pi_t(c;\cdot),\widehat{P}^c_t)
        \brk*{
        b_{t,h}^R(c,s_h,a_h)
        +
        b_{t,h}^P(c,s_h,a_h)
        }
        \right]
        \\
        \tag{\cref{corl:UCB-value-main}}
        &
        \le
        \E_c\left[ V^{\pi_t(c;\cdot)}_{\M^{({f}_\star, {P}_\star)}(c)}(s_0)\right]
        \\
        &
        \qquad
        +
        2
        \E_c\left[
        \sum_{h=0}^{H-1}
        \sum_{s_h \in S^c_h} \sum_{a_h \in A}
        q_h(s_h,a_h|\pi_t(c;\cdot),\widehat{P}^c_t)
        \brk*{
        b_{t,h}^R(c,s_h,a_h)
        +
        b_{t,h}^P(c,s_h,a_h)
        }
        \right]
        \\
        &
        \qquad
        +
        \frac{2029 H^4 \dEP}{\betaP}
        \log^2(8TH  |\Fp|/\delta)
        +
        \frac{504 H^3 \dEP }{\betaR} 
        \log^2(64T^4 H  |\F| |\Fp|/\delta^2)
        ,
    \end{align*}
    \endgroup
    and the proof follows by writing the second term as an expectation over $s_h,a_h$ when playing $\pi_t(c; \cdot)$ on the dynamics $\widehat{P}_t^c$.
\end{proof}

\subsection{Regret Bound}
\label{sec:regret-proof}
We begin with a technical result that bounds the expected cumulative bonuses.
\begin{lemma}\label{lemma:cummulative-bonuses}
    Let $b^R_{t,h}(c,s_h,a_h), b^P_t(c,s_h,a_h)$ be the reward bonuses in \cref{eq:reward-bonuses}. We have that
    \begin{align*}
        \sum_{t=1}^{T}\sum_{h=0}^{H-1}\E_{c} \brk[s]*{
        \E_{\pi_t(c;\cdot), \widehat{P}_t^c} \brk[s]*{
        b^R_{t,h}(c,s_h,a_h) + b^P_t(c,s_h,a_h)
        }
        }
        \le
        H|S||A| \brk{\betaR + H \betaP} \log(T+1)
        .
    \end{align*}
\end{lemma}
\begin{proof}
    First we bound $b^R_{t,h}(c,s_h,a_h), b^P_{t,h}(c,s_h,a_h)$ by the second term in the minimum to get that
    \begingroup\allowdisplaybreaks
    \begin{align*}
        &\sum_{t=1}^{T}\sum_{h=0}^{H-1}
        \E_{c} 
        \brk[s]*{
        \E_{\pi_t(c;\cdot), \widehat{P}_t^c} \brk[s]*{
        b^R_{t,h}(c,s_h,a_h) + b^P_{t,h}(c,s_h,a_h)
        }
        }
        \\
        &
        =
        \E_c\brk[s]*{
        \sum_{t=1}^{T}
        \sum_{h=0}^{H-1}
        \sum_{s_h \in S^c_h}
        \sum_{a_h \in A} q_h(s_h,a_h|\pi_t(c;\cdot),\widehat{P}^c_t)
        \brk[s]*{
        b^R_{t,h}(c,s_h,a_h) + b^P_{t,h}(c,s_h,a_h)
        }
        }
        \\
        &\le
        \frac12 \brk*{\betaR + \betaP H}
        \E_c\brk[s]*{
        \sum_{t=1}^{T}
        \sum_{h=0}^{H-1}
        \sum_{s_h \in S^c_h}
        \sum_{a_h \in A} \frac{q_h(s_h,a_h|\pi_t(c;\cdot),\widehat{P}^c_t)}{1 + \sum_{i=1}^{t-1}q_h(s_h,a_h| \pi_i(c;\cdot), \widehat{P}^c_i)}
        }
        \\
        &
        =
        \frac12 \brk*{\betaR + \betaP H}
        \E_c\brk[s]*{
        \sum_{h=0}^{H-1}
        \sum_{s_h \in S^c_h}
        \sum_{a_h \in A} 
        \sum_{t=1}^{T}
        \frac{q_h(s_h,a_h|\pi_t(c;\cdot),\widehat{P}^c_t)}{1 + \sum_{i=1}^{t-1}q_h(s_h,a_h| \pi_i(c;\cdot), \widehat{P}^c_i)}
        }
        \\
        &
        \le
        \frac12 \brk*{\betaR + \betaP H}
        \E_c\brk[s]*{
        \sum_{h=0}^{H-1}
        \sum_{s_h \in S^c_h}
        \sum_{a_h \in A} 
        2 \log (T+1)
        }
        \\
        &
        =
        H \abs{S} \abs{A} \brk*{\betaR + \betaP H} \log (T+1)
        ,
    \end{align*}
    \endgroup
    where the last inequality used \cref{lemma:general-potential-bound} with $x_t = q_h(s_h,a_h|\pi_t(c;\cdot),\widehat{P}^c_t)$.
\end{proof}

Using all the above, we derive our main result stated in the following theorem.
\begin{theorem}[E-UC$^3$RL regret bound, restatement of~\cref{thm:UC3RL-regret-bound}]
    For any $T > 1$ and $\delta \in (0,1)$, suppose we run \cref{alg:UCCRL-main} with parameters 
    \begin{align*}
        \betaR
        =
        \sqrt{\frac{504 T H^2 \dEP \log^2(64 T^4 H \abs{\F} |\Fp|/\delta^2)}{|S||A| \log(T+1)}}
        ,
        \quad
        \betaP
        =
        \sqrt{\frac{2029 T H^2 \dEP \log^2(8 T H |\Fp| /\delta)}{|S||A| \log(T+1)}}
        ,
    \end{align*}
    and $\dEP\geq \dHE(\Fp, D_H, T^{-1/2})$.
    Then, with probability at least $1-\delta$
    \begin{align*}
        \Regrv_T(\text{E-UC$^3$RL}) 
        \leq
        \widetilde{O}
        \brk*{
        H^3
        \sqrt{
        T \abs{S} \abs{A}
        \dEP
        \left(\log \brk{\abs{\F}/\delta}  + \log \brk{\abs{\Fp}/ \delta} \right)
        }
        }
        .
    \end{align*}
\end{theorem}

\begin{proof}[Proof of~\cref{thm:UC3RL-regret-bound}]
    We prove a regret bound under the following good events.
    The first event is that of \cref{corl:UCB-value-main}, which occurs with probability at least $1-3\delta/4$. 
    The second event is that
    \begin{equation}\label{eq:regret-to-expected-regret}
        \begin{split}
            \sum_{t=1}^T V^{\pi^\star(c_t; \cdot)}_{\M(c_t)}(s_0) -  V^{\pi_t(c_t; \cdot)}_{\M(c_t)}(s_0)
            \leq 
            \sum_{t=1}^T \E_c \left[V^{\pi^\star(c; \cdot)}_{\M(c)}(s_0)\right] - \E_c\left[V^{\pi_t(c; \cdot)}_{\M(c)}(s_0)\right] 
            +
            H\sqrt{T \log(8/\delta)}.   
        \end{split}
    \end{equation}
    By Azuma's inequality (where the filtration is the histories $\{\Hist_t\}_{t=1}^T$), the above holds with probability at least $1-\delta/4$. Taking a union bound, the good event holds with probability at least $1-\delta$.
    Hence, assume the good events hold, and consider the following derivation.
     \begingroup\allowdisplaybreaks
    \begin{align*}
        &
        \sum_{t=1}^T \E_c \left[V^{\pi^\star(c; \cdot)}_{\M(c)}(s_0)\right] - \E_c\left[V^{\pi_t(c; \cdot)}_{\M(c)}(s_0)\right]
        \\
        = &
        \sum_{t=1}^T 
        \E_c \left[V^{\pi^\star(c; \cdot)}_{\M(c)}(s_0)\right]
        -
        \E_c\left[V^{\pi_t(c; \cdot)}_{\Mhat_t(c)}(s_0)\right]
        \\
        & +
        \sum_{t=1}^T 
        \E_c \left[V^{\pi_t(c; \cdot)}_{\Mhat_t(c)}(s_0)\right] - \E_c\left[V^{\pi_t(c; \cdot)}_{\M(c)}(s_0)\right]
        \\
        \tag{\cref{lemma:opt-in-expectation-main,lemma:optimism-cost-main}}
        \le &
        2\sum_{t=1}^{T} \sum_{h=0}^{H-1}\E_{c} \brk[s]*{
        \mathop{\E}_{\pi_t(c;\cdot), \widehat{P}_t^c} \brk[s]*{
        b^R_{t,h}(c,s_h,a_h) + b^P_{t,h}(c,s_h,a_h)
        }
        }
        \\
        &
        +
        2T
        \frac{2029 H^4 \dEP}{\betaP}
        \log^2(8TH  |\Fp|/\delta)
        \\
        &
        +
        2T
        \frac{504 H^3 \dEP }{\betaR} 
        \log^2(64T^4 H  |\F| |\Fp|/\delta^2)
        \\
        \tag{\cref{lemma:cummulative-bonuses}}
        \le &
        2 H \abs{S} \abs{A} \brk{\betaR + H \betaP}
        \log (T+1)
        \\
        &
        +
        2T
        \frac{2029 H^4 \dEP}{\betaP}
        \log^2(8TH  |\Fp|/\delta)
        \\
        &
        +
        2T
        \frac{504 H^3 \dEP }{\betaR} 
        \log^2(64T^4 H  |\F| |\Fp|/\delta^2)
        \\
        \tag{Plugging in $\betaP,\betaR$}
        = &
        4 \sqrt{504 T \abs{S}\abs{A} H^4 \dEP \log(T+1) \log^2(64 T^4 H \abs{\F} \abs{\Fp} / \delta^2}
        \\
        &
        +
        4 \sqrt{2029 T \abs{S}\abs{A} H^6 \dEP \log(T+1) \log^2(8 T H \abs{\Fp} / \delta}
        \\
        \le &
        4 \sqrt{2029 T \abs{S}\abs{A} H^6 \dEP \log(T+1) \log^2(8 T H \abs{\Fp} / \delta}
        \\
        &
        270 H^3
        \sqrt{ T \abs{S} \abs{A}
        \dEP
        \log (T+1)
        \log^2 \brk{18 T^4 H \abs{\F}\abs{\Fp}/ \delta^2}
        }
        .
    \end{align*}
     \endgroup
    Finally, we get that
    \begingroup
    \allowdisplaybreaks
    \begin{align*}
        \Regrv_T(\text{E-UC$^3$RL})
        = &
        \sum_{t=1}^T V^{\pi^\star(c_t; \cdot)}_{\M(c_t)}(s_0) -  V^{\pi_t(c_t; \cdot)}_{\M(c_t)}(s_0)
        \\
        \tag{\cref{eq:regret-to-expected-regret}}
        \leq &
        \sum_{t=1}^T \brk*{\E_c \left[V^{\pi^\star(c; \cdot)}_{\M(c)}(s_0)\right] - \E_c\left[V^{\pi_t(c; \cdot)}_{\M(c)}(s_0)\right]} 
        \\
        & +
        H\sqrt{T \log(8/\delta)}
        \\
        \leq &
        271 H^3
        \sqrt{ T \abs{S} \abs{A}
        \dEP
        \log (T+1)
        \log^2 \brk{18 T^4 H \abs{\F}\abs{\Fp}/ \delta^2}
        }
        \\
        = &
        \widetilde{O}
        \brk*{
        H^3
        \sqrt{
        T \abs{S} \abs{A}
        \dEP
        \left(\log \brk{\abs{\F}/\delta}  + \log \brk{\abs{\Fp}/ \delta} \right)
        }
        }
        .
        \qedhere
    \end{align*}
    \endgroup
\end{proof}

\section{Auxiliary lemmas}\label{Appendix:Aux-lemmas}

\begin{lemma}\label{lemma:general-potential-bound}
    Let $S_t = \lambda + \sum_{k=1}^{t-1} x_t$. Suppose $x_t \in [0,\lambda]$ and , then
    \begin{align*}
        \sum_{t=1}^{T} \frac{x_t}{S_t}
        \le
        2 \log (T+1)
        .
    \end{align*}
\end{lemma}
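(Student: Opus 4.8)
The plan is to convert the sum into a telescoping sum of logarithms via an elementary pointwise inequality. First I would record the one structural fact that the hypothesis buys us: by definition $S_t = \lambda + \sum_{k=1}^{t-1} x_k \ge \lambda \ge x_t$, so the ratio $u_t := x_t/S_t$ always lies in $[0,1]$. (Here I read the sum inside $S_t$ as running over a dummy index $k$, and the assumption $x_t \in [0,\lambda]$ is exactly what forces $x_t \le \lambda \le S_t$.)

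Next I would invoke the elementary inequality $u \le 2\log(1+u)$, valid for all $u \in [0,1]$; this holds because $g(u) := 2\log(1+u) - u$ satisfies $g(0) = 0$ and $g'(u) = (1-u)/(1+u) \ge 0$ on $[0,1]$, so $g \ge 0$ there. Applying it at $u = u_t$ gives $x_t/S_t \le 2\log(1 + x_t/S_t)$. The crucial observation is that $1 + x_t/S_t = (S_t + x_t)/S_t = S_{t+1}/S_t$, since $S_{t+1} = S_t + x_t$. Hence each summand is bounded by $2\log(S_{t+1}/S_t)$, and summing over $t = 1,\ldots,T$ the logarithms telescope to $2\log(S_{T+1}/S_1) = 2\log\bigl((\lambda + \sum_{k=1}^T x_k)/\lambda\bigr)$.

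Finally I would bound the argument of the remaining logarithm using $x_k \le \lambda$, so that $\sum_{k=1}^T x_k \le T\lambda$ and therefore $(\lambda + \sum_{k=1}^T x_k)/\lambda \le 1 + T$, which yields the claimed bound $2\log(T+1)$. I do not expect a genuine obstacle here: this is a standard potential (elliptical-potential-style) argument, and the only point requiring care is keeping $u_t$ inside $[0,1]$ so that the elementary inequality applies — which is precisely guaranteed by $x_t \le \lambda \le S_t$. Everything else is a telescoping identity together with a trivial upper bound on the accumulated sum.
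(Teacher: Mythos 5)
Your proof is correct and follows essentially the same route as the paper: the paper writes $\frac{x_t}{S_t} = \frac{S_{t+1}}{S_t} - 1$ and uses $v - 1 \le 2\log v$ for $v \in [1,2]$, which is exactly your inequality $u \le 2\log(1+u)$ on $[0,1]$ under the substitution $v = 1 + u$, followed by the same telescoping and the same bound $S_{T+1}/S_1 \le T+1$.
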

\begin{proof}
    The following holds.
    \begingroup\allowdisplaybreaks
    \begin{align*}
        \sum_{t=1}^{T} \frac{x_t}{S_t}
        &
        =
        \sum_{t=1}^{T} \frac{S_{t+1} - S_t}{S_t}
        \\
        &
        =
        \sum_{t=1}^{T} \frac{S_{t+1}}{S_t} - 1
        \\
        &
        \tag{ $1 \leq \frac{S_{t+1}}{S_t} \leq 2$ since $x_t \leq \lambda$}
        \le
        2 \sum_{t=1}^{T} \log\frac{S_{t+1}}{S_t}
        \\
        &
        =
        2\sum_{t=1}^{T}\log S_{t+1} - \log S_t
        \\
        \tag{telescopic sum}
        &
        =
        2 \log \frac{S_{T+1}}{S_1}
        \\
        &
        \le
        2 \log (T+1)
        .
        \qedhere
    \end{align*}
    \endgroup
\end{proof}

\begin{lemma}[value-difference, Corollary $1$ in~\citealp{efroni2020optimistic}]\label{lemma:val-diff}
    Let $M$, $M'$ be any $H$-finite horizon MDPs. Then, for any two policies $\pi$, $\pi'$ the following holds
    \begingroup\allowdisplaybreaks
    \begin{align*}
        &
        V^{\pi,M}_0(s)
        -
        V^{\pi',M'}_0(s) 
        =
        \sum_{h=0}^{H-1} \E \left[ \langle  Q^{\pi,M}_h(s_h, \cdot) , \pi_h(\cdot|s_h) - \pi'_h(\cdot|s_h) \rangle |s_0 = s, \pi',M' \right]
        \\
        &\hspace{2em} +
        \sum_{h=0}^{H-1} \E \left[ 
        r_h(s_h,a_h) -  r'_h(s_h,a_h)
        + (p_h(\cdot|s_h,a_h)-p'_h(\cdot|s_h,a_h)) V^{\pi,M}_{h+1}
        |s_h = s, \pi',M' \right] .       
    \end{align*}
    \endgroup
\end{lemma}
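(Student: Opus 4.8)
The plan is to prove the identity by a one-sided telescoping argument in which the value function $V^{\pi,M}$ of the reference pair $(\pi,M)$ is expanded along a trajectory drawn from the comparator pair $(\pi',M')$. Throughout, every expectation is taken under $(\pi',M')$, while the $Q$- and $V$-functions are those of $(\pi,M)$; keeping this asymmetry straight is the only real subtlety.

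First I would fix a trajectory $s_1, a_1, \ldots, s_{H-1}, a_{H-1}, s_H$ generated by $\pi'$ in $M'$ and write the deterministic telescope $V^{\pi,M}_1(s_1) = \sum_{h=1}^{H-1}\bigl(V^{\pi,M}_h(s_h) - V^{\pi,M}_{h+1}(s_{h+1})\bigr)$, using the boundary condition $V^{\pi,M}_H(s_H)=0$. Taking expectations over the rollout and subtracting the cost expansion $V^{\pi',M'}_1(s) = \E_{\pi',M'}[\sum_{h=1}^{H-1} c'_h(s_h,a_h)\mid s_1=s]$ reduces the claim to showing that, for each $h$, the conditional expectation given $s_h$ of $V^{\pi,M}_h(s_h) - V^{\pi,M}_{h+1}(s_{h+1}) - c'_h(s_h,a_h)$ equals the $h$-th summand on the right-hand side.

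Next I would evaluate this per-step term. Conditioning on $s_h$ and using $a_h \sim \pi'_h(\cdot\mid s_h)$ and $s_{h+1} \sim p'_h(\cdot\mid s_h,a_h)$, I rewrite $V^{\pi,M}_h(s_h) = \langle Q^{\pi,M}_h(s_h,\cdot), \pi_h(\cdot\mid s_h)\rangle$ and split $\pi_h = (\pi_h - \pi'_h) + \pi'_h$. The $(\pi_h-\pi'_h)$ part yields directly the advantage term $\langle Q^{\pi,M}_h(s_h,\cdot), \pi_h(\cdot\mid s_h) - \pi'_h(\cdot\mid s_h)\rangle$. For the $\pi'_h$ part I apply the Bellman equation $Q^{\pi,M}_h(s_h,a_h) = c_h(s_h,a_h) + \sum_{s'} p_h(s'\mid s_h,a_h)V^{\pi,M}_{h+1}(s')$, so that $\langle Q^{\pi,M}_h(s_h,\cdot),\pi'_h\rangle - \E_{a_h\sim\pi'_h}[c'_h(s_h,a_h)] - \E_{a_h\sim\pi'_h,\, s'\sim p'_h}[V^{\pi,M}_{h+1}(s')]$ collapses to $\E_{a_h\sim\pi'_h}[c_h(s_h,a_h) - c'_h(s_h,a_h) + (p_h(\cdot\mid s_h,a_h)-p'_h(\cdot\mid s_h,a_h))V^{\pi,M}_{h+1}]$, which is exactly the mismatch term.

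Summing the per-step identities over $h$ and taking the outer expectation under $(\pi',M')$ recovers the two sums in the statement. There is no genuine obstacle beyond bookkeeping: the argument is a purely algebraic rearrangement once the Bellman relations are invoked, and the main point to watch is that the transition mismatch $(p_h - p'_h)$ multiplies the \emph{reference} next-step value $V^{\pi,M}_{h+1}$ rather than the comparator's, which is precisely what the one-sided telescope along the $(\pi',M')$ rollout delivers.
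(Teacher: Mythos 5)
Your proof is correct. The paper itself does not prove this lemma --- it imports it verbatim as Corollary~1 of \citet{efroni2020optimistic} --- and your argument (telescoping $V^{\pi,M}_1(s_1) = \sum_{h=1}^{H-1}\bigl(V^{\pi,M}_h(s_h) - V^{\pi,M}_{h+1}(s_{h+1})\bigr)$ along a $(\pi',M')$ rollout, splitting $\pi_h = (\pi_h-\pi'_h)+\pi'_h$, and applying the Bellman equation for $Q^{\pi,M}_h$ to extract the cost and transition mismatch terms) is precisely the standard derivation given in that reference, correctly handling the one subtlety that $(p_h-p'_h)$ multiplies the reference value $V^{\pi,M}_{h+1}$. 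One minor point: your proof establishes the identity with all expectations conditioned on $s_1=s$, which confirms that the conditioning ``$s_h=s$'' in the second sum of the paper's statement is a typo for ``$s_1=s$''.
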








\end{document}